\documentclass[10pt,twocolumn,letterpaper]{article}

\usepackage[pagenumbers]{cvpr} 

\newcommand{\multiviz}{\textsc{MultiViz}\xspace}
\newcommand{\multishap}{\textsc{MultiSHAP}\xspace}
\newcommand{\emap}{\textsc{Emap}\xspace}
\newcommand{\dime}{\textsc{Dime}\xspace}
\newcommand{\xhdr}[1]{\vspace{0em}\noindent{{\bf #1.}}}
\newcommand{\err}[1]{\ensuremath{\scriptscriptstyle \pm\,#1}}

\usepackage{microtype}

\renewcommand{\paragraph}[1]{\vspace{.5em}\noindent\textbf{#1.}}

\usepackage{xspace}
\newcommand{\method}{GridVQA-X\xspace}

\usepackage{booktabs}
\usepackage{pifont}
\usepackage{xcolor}
\usepackage{algorithm}
\usepackage{algpseudocode}

\usepackage{stfloats}
\usepackage{caption}
\usepackage{amsthm}
\usepackage{multirow}

\usepackage{soul}
\setuldepth{foobar}

\newtheorem{definition}{Definition}
\newtheorem{theorem}{Theorem}

\definecolor{cvprblue}{rgb}{0.21,0.49,0.74}
\usepackage[pagebackref,breaklinks,colorlinks,allcolors=cvprblue]{hyperref}

\def\paperID{44} 
\def\confName{CVPR}
\def\confYear{2026}

\title{GridVQA-X: A Framework for Evaluating Multimodal Explainability Methods}

\author{
Sujay Belsare$^{1,}$\thanks{Equal Contribution} \\
{\tt\small sujay.belsare@students.iiit.ac.in}
\and
Sudarshan Nikhil$^{1,}$\footnotemark[1] \\
{\tt\small sudarshan.nikhil@students.iiit.ac.in}
\and
Sushant Kumar$^{1}$\\
{\tt\small sushant.k@research.iiit.ac.in}
\and
Ponnurangam Kumaraguru$^{1}$\\
{\tt\small pk.guru@iiit.ac.in}
\and
Chirag Agarwal$^{2}$\\
\vspace{-2in}
\and
{\small
$^{1}$IIIT Hyderabad, India \quad
$^{2}$University of Virginia, USA
}
}

\begin{document}
\twocolumn[{%
\renewcommand\twocolumn[1][]{#1}%
\maketitle
\begin{center}
        \vspace{-0.25in}
        \centering
        \includegraphics[width=0.99\textwidth]{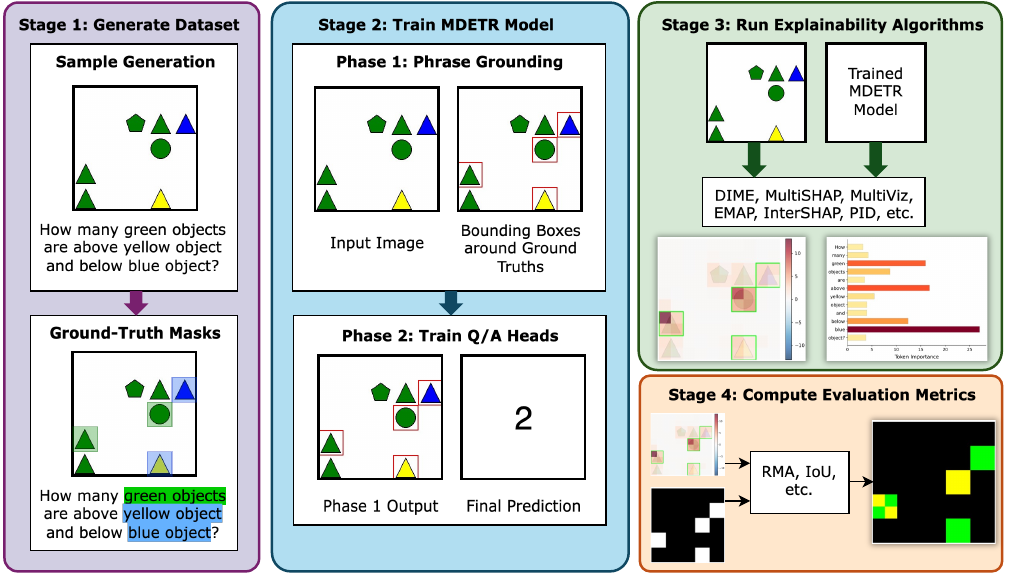}
        \captionof{figure}{\looseness=-1\textbf{\method framework:} Four-stage pipeline: (1) dataset generation with ground-truth phrase masks, (2) two-phase model training for phrase grounding and QA, (3) attribution generation using post-hoc explainers, and (4) evaluation by comparison with ground-truth masks.}
        \label{fig:teaser}
\end{center}%
}]
\begin{abstract}
With the increasing development of Vision-Language Models, it becomes imperative that their predictions are readily explainable to relevant stakeholders. However, the field of explainability has not kept pace with the multimodal surge. While recent Multimodal Explainable AI (MxAI) methods generate explanations to attribute the interaction between different modalities, current evaluation protocols lack the ground truth required to distinguish between true cross-modal reasoning (\eg spatial composition) and shallow cross-modal shortcuts (\eg Bag-of-Words attribute matching). It remains unknown whether MxAI methods faithfully capture synergistic interactions or merely hallucinate reasoning on models acting as simple feature detectors. In this paper, we introduce \method, the first diagnostic framework specifically designed to evaluate cross-modal explainability. Unlike natural datasets, \method leverages a closed-world synthesis logic to generate unique, mathematically guaranteed explanations. We utilize this controlled environment to train paired ground-truth models on identical architectures: $M_{\text{pure}}$, which learns robust spatial-relational reasoning and $M_{\text{spur}}$, which is structurally forced to rely on cross-modal shortcuts. This behavioral divergence creates a rigorous testbed: \textit{a faithful explainer must report distinct reasoning pathways for each model}.
Our findings reveal that widely used methods fail to distinguish between models relying on genuine spatial-relational reasoning and those exploiting cross-modal shortcuts, highlighting a critical gap in capturing true cross-modal synergy and misrepresenting how multimodal models actually make decisions. Our code is available at \href{https://github.com/deephelms28/GridVQA-X}{link}.
\end{abstract}
    
\section{Introduction}
\label{sec:intro}
\looseness=-1 The integration of visual and textual modalities has driven unprecedented performance in Large Vision-Language Models (LVLMs). However, this success is often achieved through deeply entangled representations, rendering the models highly opaque \cite{rodis2024multimodalexplainableartificialintelligence}. To facilitate the adoption of these predictive advancements in high-stakes applications like healthcare \cite{li2023llavamedtraininglargelanguageandvision, zhang2025biomedclipmultimodalbiomedicalfoundation}, post-hoc multimodal explainable AI (MxAI) methods have emerged~\cite{lyu2022dimefinegrainedinterpretationsmultimodal, hessel2020doesmultimodalmodellearn, liang2023multivizvisualizingunderstandingmultimodal, Wenderoth_2025, wang2026multishapshapleybasedframeworkexplaining, goldshmidt2025attentionpleasepixelshapreveals, liang2023quantifyingmodelingmultimodal}. These methods claim to attribute model decisions to input modality features by uncovering complex cross-modal interactions. Among these MxAI methods, those agnostic to the model architecture and downstream tasks hold the greatest promise for applicability across varied domains.

\looseness=-1 While existing post-hoc, task- and model-agnostic MxAI methods explain the decisions of multimodal models, there is little to no guarantee whether they faithfully reflect the model's true decision-making process.
Furthermore, although many explainability benchmarks have been recently introduced~\cite{Nauta_2023, ijcai2023p747} utilizing either ground-truth labels~\cite{Salewski_2022, Arras_2022, agarwal2024openxaitransparentevaluationmodel} or proxy measures~\cite{li2023mathcalm} across various datasets and models, their evaluation remains largely limited to unimodal methods.
These unimodal evaluation protocols cannot be trusted in multimodal domains where features interact across different semantic spaces \cite{zhang2025crossmodalinformationflowmultimodal}. Consequently, the rigorous assessment of these post-hoc MxAI methods \textbf{remains completely unexplored}, representing a critical gap in explainability research that we seek to fill.

\looseness=-1 Unlike their unimodal counterparts, MxAI methods claim to capture the complex interactions that arise when modalities fuse \cite{lyu2022dimefinegrainedinterpretationsmultimodal, li2025multimodalrationalesexplainablevisual}. Therefore, a gold-standard test for these methods requires the absolute ground-truth for these interactions. Diverse real-world multimodal tasks (\eg VQA, visual reasoning) have noisy feature distributions, offering no causally defined ground-truth attributes per modality. Furthermore, human-annotated features can easily be spuriously correlated with non-ground-truth visual elements, invalidating any faithfulness evaluation if the underlying model is merely exploiting that shortcut \cite{yuksekgonul2023when, chi2025chimeradiagnosingshortcutlearning}. Due to the presence of another modality, shortcuts can also sit entirely within the cross-modal space (\eg shallow Bag-of-Words matching), making it exceptionally difficult to identify the true generative process of the model's prediction.

To address these challenges, we propose \method: the first diagnostic framework explicitly designed to objectively evaluate the faithfulness of post-hoc cross-modal explainers, which we utilize to benchmark recent state-of-the-art methods. In particular, our contributions are threefold:
\begin{enumerate}
    \item \looseness=-1 We introduce two synthetic datasets, $\mathcal{D}_{\text{pure}}$ and $\mathcal{D}_{\text{spur}}$, acting as a controlled testbed for MxAI evaluation. These datasets feature mathematically guaranteed unique ground-truth features, a taxonomy leveraging spatial reasoning to isolate the diverse degree of real-world interactions, and the provable absence ($\mathcal{D}_{\text{pure}}$) or presence ($\mathcal{D}_{\text{spur}}$) of known spurious correlations. We also release the data generation code for reproducible and customization scaling.
    \item \looseness=-1 We release two reference models ($M_{\text{pure}}$ and $M_{\text{spur}}$) trained via explanation-guided dynamics to achieve near-perfect accuracy on their respective datasets. By verifiably knowing the models' underlying reasoning mechanisms, \ie true spatial reasoning vs. cross-modal shortcut, we can evaluate explainers with zero ambiguity \textit{w.r.t.} the expected ground-truth attributions.
    \item We adapt existing MxAI metrics to multimodal domain and introduce novel metric to assess the cross model interaction scalar on compositional complexity. By doing this we provide the first rigorous benchmark for evaluating post-hoc cross-modal explainers analyzing the failure modes.    
\end{enumerate}
\section{Related Work}
\label{sec:related}
Our work lies at the intersection of explainability methods and multimodal models. Below, we briefly describe them and detail them further in Appendix~\ref{app:related}.

\looseness=-1\noindent\xhdr{Multimodal Datasets} The transition toward analyzing modern visual reasoning systems began with datasets like CLEVR \cite{johnson2016clevrdiagnosticdatasetcompositional}, which provide programmatically generated 3D scenes and questions to evaluate whether a system genuinely reasons rather than exploiting the aforementioned spurious correlations. To extend this rigorous evaluation to natural images, the GQA \cite{hudson2019gqanewdatasetrealworld} dataset was introduced, leveraging real-world scene graphs to generate compositional questions while strictly balancing answer distributions to neutralize statistical world priors. The CLEVR-XAI \cite{Arras_2022} dataset supplies question-conditioned and pixel-level ground truth masks, allowing us to verify whether an AI's visual explanations truly align with its underlying logic, albeit only for deep neural networks.

\looseness=-1\noindent\textbf{MxAI Methods:} Recent papers highlight the growing need for interpretability techniques to understand multimodal LLMs~\cite{agarwal2025rethinkingexplainabilityeramultimodal}. Existing research \cite{dang2024explainableinterpretablemultimodallarge, sun2024reviewmultimodalexplainableartificial} can be broadly classified into three methodological categories: i) comprising game-theoretic, Shapley-based attribution methods~\cite{Wenderoth_2025, goldshmidt2025attentionpleasepixelshapreveals, wang2026multishapshapleybasedframeworkexplaining}; ii) focusing on internal representation and visualization techniques~\cite{liang2023multivizvisualizingunderstandingmultimodal}, where they scaffold interpretability into distinct stages to help users simulate predictions, understand feature representations, and debug errors; and iii) consisting of perturbation and projection-based approaches~\cite{hessel2020doesmultimodalmodellearn, lyu2022dimefinegrainedinterpretationsmultimodal}.

\begin{table}
\centering
\scriptsize
\setlength{\tabcolsep}{2pt}
\renewcommand{\arraystretch}{1}
\caption{Comparison of GridVQA-X with existing benchmarks}\vspace{-0.15in}
\begin{tabular}{lcccc}
\toprule
\textbf{Criteria} & \method & Clevr-XAI~\cite{Arras_2022} & OpenXAI~\cite{agarwal2024openxaitransparentevaluationmodel} & LATEC~\cite{klein2025navigatingmazeexplainableai} \\
\midrule

Multimodal & \textcolor{green}{\ding{51}} & \textcolor{red}{\ding{55}} & \textcolor{red}{\ding{55}} & \textcolor{red}{\ding{55}} \\

Unique Explanations & \textcolor{green}{\ding{51}} & \textcolor{red}{\ding{55}} & \textcolor{red}{\ding{55}} & \textcolor{red}{\ding{55}} \\

Process Identifiability & \textcolor{green}{\ding{51}} & \textcolor{red}{\ding{55}} & \textcolor{red}{\ding{55}} & \textcolor{red}{\ding{55}} \\

Controlled Environment & \textcolor{green}{\ding{51}} & \textcolor{green}{\ding{51}} & \textcolor{green}{\ding{51}} & \textcolor{red}{\ding{55}} \\

\bottomrule
\end{tabular}
\end{table}

\noindent\textbf{Comparison with Existing Benchmarks:}
While existing explainability benchmarks have made significant strides, they remain structurally insufficient for evaluating cross-modal synergy. Frameworks like OpenXAI \cite{agarwal2024openxaitransparentevaluationmodel} focus exclusively on unimodal data, offering no mechanisms to assess interactions across vastly different semantic spaces. Conversely, existing vision-language XAI benchmarks such as CLEVR-XAI \cite{Arras_2022} and LATEC \cite{klein2025navigatingmazeexplainableai} lack the strict identifiability of the model's generative process; because the true reasoning pathway of the underlying black-box model remains opaque, it is impossible to definitively know if the model utilized genuine compositional reasoning or a shallow cross-modal shortcut. Furthermore, these environments often lack mathematically unique ground-truth explanations, relying instead on proxy metrics or subjective human annotations. \textit{\method overcomes these limitations by providing a strictly controlled multimodal environment. By mathematically guaranteeing unique ground-truth features and evaluating against paired models with verifiably identified reasoning pathways, \method enables an objective, zero-ambiguity assessment of multimodal explainers.}

\section{The GridVQA Dataset}
\label{sec:dataset}

\begin{figure*}[t]
    \centering
    \includegraphics[width=0.9\textwidth]{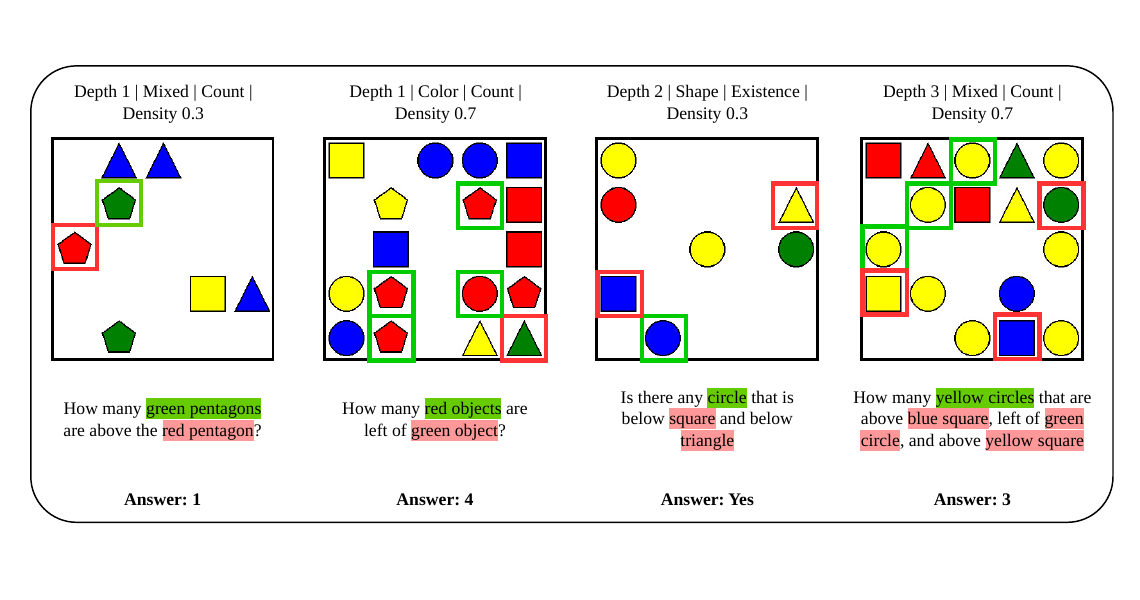} 
    \vspace{-0.35in}
    \caption{\textbf{The GridVQA Taxonomy.} Examples illustrating the dataset parameterization axes. The \textit{Density} axis ($d_{0.3}$ vs. $d_{0.7}$) models real-world background noise.  The \textit{Depth} axis, scaling relational complexity from single-hop to multi-hop spatial compositions. The textual queries highlight targets and anchors, demonstrating how \textit{QType} selectively omits (like in Shape-Only, Color-Only) or ensures (like in Mixed) visual confounders to force robust reasoning.}
    \label{fig:taxonomy}
\end{figure*}

To facilitate the rigorous assessment of multimodal explainability methods, we utilize spatial reasoning as a playground for controlled evaluation. In any real-world scenario, a visual scene is fundamentally a composition of atomic objects possessing intrinsic features (\eg color, shape) and extrinsic semantic relationships (\eg spatial positioning). GridVQA distills this complexity into a controlled, fully observable abstraction. A standard GridVQA sample consists of a $S \times S$ visual grid populated by geometric objects, paired with a multi-hop spatial reasoning question. Every query defines a \textbf{Target} (the object(s) the model must find/count) and one or more \textbf{Anchors} (the reference objects used to locate the target) connected by strict spatial directional tokens. Crucially, GridVQA's generation axes are explicitly designed to mimic the dynamics of real-world multimodal tasks, acting as a necessary stress test for MxAI algorithms before they can be trusted in high-stakes scenarios.

\subsection{Design and Taxonomy}
\label{sec:taxonomy}
Every sample in the GridVQA universe (see Fig.~\ref{fig:taxonomy} for some examples) is parameterized by a strict 4-tuple $\mathcal{T} = (D, Q, F, \rho)$:
\begin{itemize}
    \item \textbf{Depth $D \in \{1, 2, 3\}$} controls the number of anchor objects in the image, and thus the complexity of the question. For example, a Depth 1 query (\textit{``...left of the blue circle?''}) contains one anchor. A Depth 2 query (\textit{``...left of the blue circle and above the red square?''}) requires intersecting relations from two anchors and so on.
    \item \textbf{QType} $Q \in \{A, SO, CO, M, CMP\}$ specifies the type of question being asked. See Table~\ref{tab:qtype} for more details.
    \item \textbf{Form $F \in \{0, 1\}$} alters the objective without changing the underlying causal graph. Form $0$ poses a counting task (\textit{``\textbf{How many} red squares are...''}), while Form $1$ poses an existence task (\textit{``\textbf{Is there} any red square...''}). This ensures explanations are tied to the true causal reasoning of the scene, rather than overfitting to the statistical format of the task header.
    \item \textbf{Density $\rho \in \{d_{0.3}, d_{0.7}\}$} controls the total number of objects in the grid. Sparse grids allow us to test spatial localization fidelity, whereas dense grids test the explainer's ability to cut through the distractor objects.
\end{itemize}

\subsection{Provably Unique Ground Truth Features}
\label{sec:formalism}

To mathematically benchmark MxAI methods and prove the absence of shortcuts, we define the limits of our fully observable universe. Let the visual scene $\mathcal{V}$ be defined as a set of $N$ objects $\{o_1, \dots, o_N\}$. Each object is a tuple of independent atomic features $o_i = (c_i, s_i, p_i)$ representing color, shape, and position. The multimodal query $\mathcal{Q}$ defines attribute constraints for target objects $\mathcal{T}$, anchor objects $\mathcal{A}$, and the spatial constraint $p_{\text{rel}}$ connecting them.

\noindent\textbf{Causal Intervention:} Given the fully observable state of $\mathcal{V}$, we construct a causal framework to isolate the ground-truth explanation. Let $\text{Dist} = \mathcal{V} \setminus (\mathcal{A} \cup \mathcal{T})$ be the set of all remaining distractor objects. Applying Pearl's $do$-calculus \cite{tucci2013introductionjudeapearlsdocalculus}, we define an intervention $do(o_d \to o_d')$ that alters any distractor $o_d \in Dist_{adv}$. Because $\mathcal{Q}$ is a strict logical mapping exclusively over $\mathcal{T}$ and $\mathcal{A}$, it is mathematically guaranteed that:
\begin{equation}
    P(y \mid do(o_d \to o_d')) = P(y)
\end{equation}
The causal effect of the set $Dist_{adv}$ on the output $y$ is strictly zero. Therefore, we mathematically prove that $\mathcal{A} \cup \mathcal{T}$ constitutes the unique, ground-truth causal explanation for $\mathcal{Q}$. Any explainer attributing relevance to elements in $Dist_{adv}$ is \textbf{verifiably unfaithful}.

\begin{table*}[t]
\centering
\scriptsize
\setlength{\tabcolsep}{4pt}
\renewcommand{\arraystretch}{1.2}
\caption{\method Question Type (QType) definitions and motivation}
\label{tab:qtype}\vspace{-0.12in}
\begin{tabular}{lp{0.81\textwidth}}
\toprule
\textbf{QType} & \textbf{Description and Motivation} \\
\midrule
Attribute-Only (A)
& Non-spatial queries regarding existence or counting of objects identified by both shape and color. 
Baseline to verify that the explainer can reveal objects before the added complexity of spatial reasoning. 
\newline \textit{Example:} Is there any \textbf{red circle}? \\\hline
Shape-Only (SO)
& Spatial relational queries where both target and anchor are identified by shape only. Isolates the explainer's ability to ground directional reasoning independently of color features.
\newline \textit{Example:} Is there any \textbf{circle} that is \textit{left of} \textbf{pentagon}? \\\hline
Color-Only (CO)
& Spatial relational queries where both target and anchor are identified solely by color. Isolates the explainer's ability to ground directional reasoning independently of shape features. 
\newline \textit{Example:} Is there any \textbf{red object} that is \textit{left of} \textbf{blue object}? \\\hline
Mixed (M)
& Spatial queries requiring the model to bind shape and color for both target and anchor. A faithful explainer must show the model utilizing the anchor's attributes and spatial relation to locate the target; failure here reveals reliance on shallow attribute-based shortcuts. 
\newline \textit{Example:} Is there any \textbf{red circle} that is \textit{left of} \textbf{blue pentagon}? \\\hline
Comparison (CMP)
& Logical operations comparing counts of different objects with both shape and color specified. Evaluates multi-step logical reasoning and grounding dependencies across the grid. 
\newline \textit{Example:} Are there more \textbf{red circles} than \textbf{blue pentagons}? \\
\bottomrule
\end{tabular}
\vspace{0.05in}
\end{table*}

\begin{figure*}[!t]
    \centering
    \includegraphics[width=0.95\textwidth]{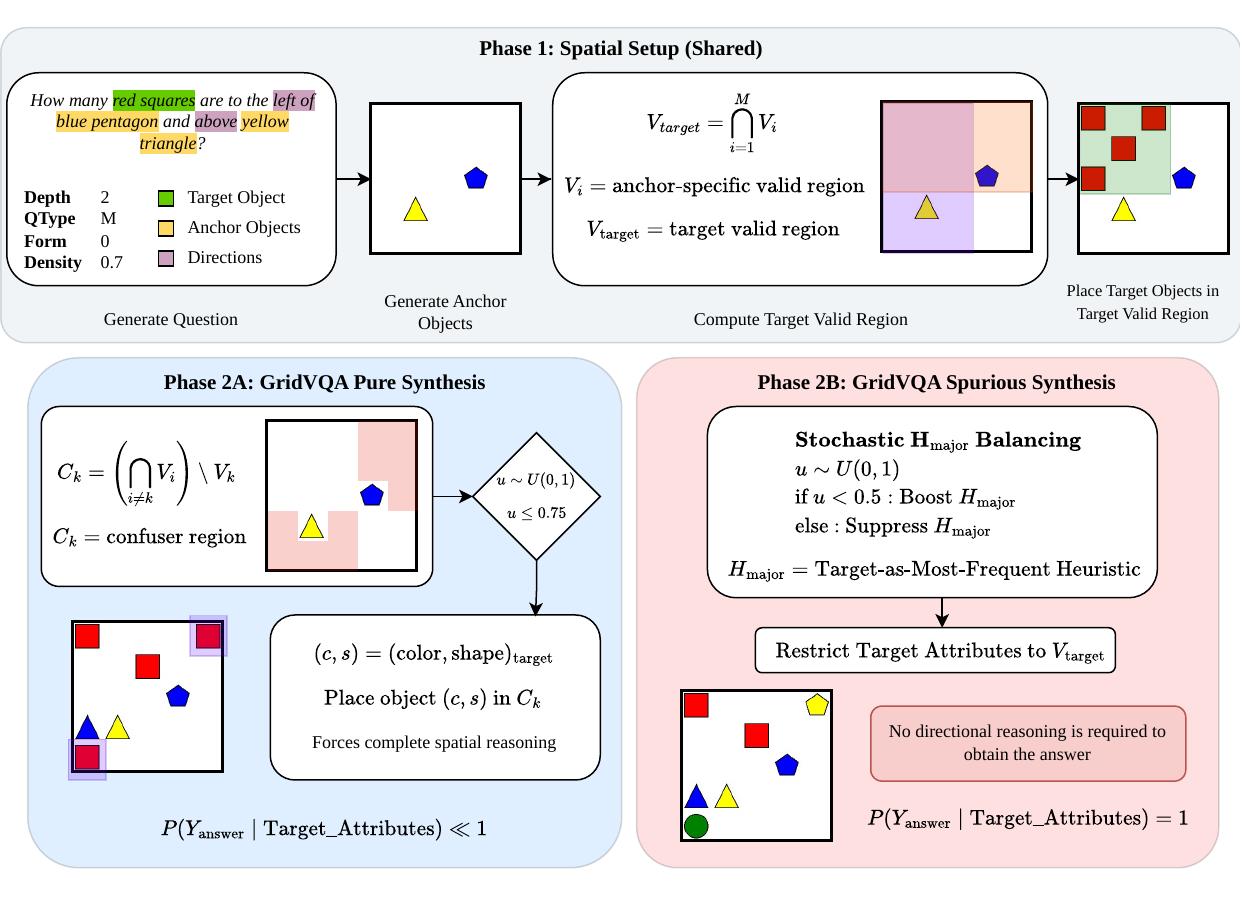}
    \caption{\textbf{The Dataset Generation Process.} \textbf{Phase 1:} Anchors are instantiated and intersected to determine the valid target bounding region ($V_{\text{target}}$). \textbf{Phase 2A ($\mathcal{D}_{\text{pure}}$):} To destroy spatial shortcuts, the generator computes adversarial confuser regions ($C_k$) and probabilistically overloads them with target-attribute distractors, forcing strict multi-hop evaluation. \textbf{Phase 2B ($\mathcal{D}_{\text{spur}}$):} The generator nullifies visual majority heuristics but actively restricts the attribute sampling pool, ensuring no target-attribute objects leak outside $V_{\text{target}}$, embedding the Case-1 trap.}
    \label{fig:generation_architecture}
\end{figure*}

\subsection{Spurious Correlation Elimination and Dataset Divergence}
\label{sec:spurious}

To systematically evaluate explainability algorithms, we must definitively know the ground-truth reasoning of the black-box models. If an explainer highlights the token ``\textit{left of}'', is it faithfully revealing the model's logic, or hallucinating a rationale onto a model utilizing a shortcut? We solve this by defining the bounds of the heuristic hypothesis space. Rather than arbitrarily selecting heuristics to evaluate, we systematically enumerate the shortcut hypothesis space by analyzing the proper subsets of the true causal feature set ($\Phi_h \subset \Phi_{\text{true}}$) within our closed-world formalism. This taxonomy yields four fundamental families of spurious correlations: Answer Priors (Case-0), Bag-of-Words Alignment (Case-1), Visual Feature Dominance (Case-2), and Logical Decomposition (Case-3). Crucially, because these cases represent the atomic failures of multimodal grounding, any higher-degree heuristic (\eg a complex shortcut relying on multiple missing constraints simultaneously) is inherently neutralized by the same structural guarantees that destroy its atomic components. 

\begin{definition}[Spurious Correlation]
Let the true causal dependency require the feature set $\Phi_{\text{true}} \subseteq \mathcal{V} \times \mathcal{Q}$.$\mathcal{Y}$ is the ground-truth. We define a heuristic $h$ as a function relying on a proper subset of features $\Phi_{h} \subset \Phi_{true}$. A spurious correlation exists if the Mutual Information (MI) exceeds a threshold $\epsilon$ (random chance): $MI(h(\Phi_h), \mathcal{Y}) > \epsilon$
\end{definition}

\looseness=-1 We provide two parallel datasets to control these heuristics. $\mathcal{D}_\text{pure}$ systematically removes all enumerated heuristic cases, guaranteeing the model learns the true causal graph. Simultaneously, $\mathcal{D}_\text{spur}$ eliminates unimodal biases (\eg visual dominance of the target attribute object) but intentionally preserves the most pervasive cross-modal shortcut. The mathematical elimination of these heuristics in $\mathcal{D}_\text{pure}$, and the precise statistical injection of the Case-1 (see below) shortcut in $\mathcal{D}_\text{spur}$, have been empirically confirmed across all dataset splits. Comprehensive generative reports detailing the predictive failure rates of these heuristics are provided in Appendix~\ref{app:dataset}.

For a given anchor object $o_i$ in a query $\mathcal{Q}$, we define its valid region $V_i$ as the set of all positions $p_j$ in the grid such that they satisfy the special constraint associated with that anchor in $\mathcal{Q}$ (\eg all positions with x-coordinate strictly less than that of the anchor for a ``\textit{left of}'' relation).\vspace{0.05in}

\xhdr{Case-1: The Spatial Shortcut (Bag-of-Words Alignment)}
The most pervasive heuristic in VQA is mapping semantic tokens directly to visual features while ignoring relationships (in our case, spatial relationships) \cite{yuksekgonul2023when} (\eg counting all ``\textit{red squares}'' instead of actually checking if they are ``\textit{left of the blue circle}'').  

In  $\mathcal{D}_\text{spur}$, the algorithm is programmed to make this shortcut perfectly predictive. By restricting the attribute sampling pool during noise injection (Fig.~\ref{fig:generation_architecture}), the generator places no target-matching objects outside the valid spatial region. Consequently, $P(Y_{\text{ans}} \mid \text{Target\_Attrs}) = 1.0$, embedding a statistical behavioral trap. Conversely, $\mathcal{D}_\text{pure}$ mathematically severs this correlation by establishing the following structural property:

\looseness=-1 \noindent \textit{Property 1 (Spatial Independence). } For every relational query $q = R(\text{Target\_}\allowbreak\text{Attrs}, \text{Anchor})$, the generative logic ensures the existence of an adversarial distractor set $Dist_{adv} \subset \mathcal{V}$ such that every object $d_{i} \in Dist_{\text{adv}}$ satisfies the target's visual attributes but explicitly violates the spatial constraint $R$. 

By populating invalid regions with adversarial distractors, the global count of target-attribute objects inherently exceeds the true relational count $Y_{\text{ans}}$. The probability of the shortcut yielding the correct answer rapidly decays to zero ($P(Y_{\text{ans}} \mid \text{Target\_Attrs}) \ll 1$), mathematically forcing the model to process spatial geometry.\vspace{0.05in}

\xhdr{Case-2: Visual Feature Dominance} To prevent the model from blindly predicting based on the most frequent object type ($H_{\text{maj}}$), the generator must logically decorrelate the target from the visual majority:

\noindent \textit{Property 2 (Orthogonality of Frequency).} Let $N(a)$ be the count of objects with attribute $a$. The generative logic actively enforces target-distractor independence such that $P(Y_{gt} = N(a) \mid a = \arg\max_{a'} N(a')) \ll 1$.

To satisfy this, the $\mathcal{D}_\text{spur}$ generator utilizes a stochastic branching mechanism (Fig.~\ref{fig:generation_architecture}), dynamically boosting or suppressing the target to nullify the visual majority dominance. The details of the algorithm are provided in Appendix~\ref{app:proof}. Note that the elimination of the Case 1 heuristic for $\mathcal{D}_\text{pure}$ also eliminates this heuristic, as if the target attributes match $H_{\text{maj}}$, then the presence of adversarial distractors makes the model always overcount the number of spatially constrained answers, making this heuristic fail.\vspace{0.05in}

\xhdr{Case-3: Logical Decomposition (Partial Logic)}
This heuristic occurs when a model receives a multi-hop query but evaluates only a subset of the constraints. If a model drops constraint $k$, evaluating $\bigwedge_{i \neq k} C_i$, it learns to safely ignore anchor $k$ without penalty. To eliminate this in $\mathcal{D}_\text{pure}$, we generalize the robust intersection requirement:

\begin{theorem}[\textbf{Generalized Robust Intersection Guarantee}]
Let $V_i$ be the valid spatial region for anchor $i$, and $V_{-k} = \bigcap_{i \neq k} V_i$ be the relaxed region. To guarantee the failure of partial logic, the generative logic must ensure the confuser region $C_k$ is non-empty and populated with at least one adversarial object:

\begin{equation}
    C_k = V_{-k} \setminus V_k = \left( \bigcap_{i \neq k} V_i \right) \setminus V_k
\end{equation}
\end{theorem}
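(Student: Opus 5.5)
The statement is really an ``if and only if'' in disguise. The partial-logic heuristic that drops constraint $k$ is indistinguishable from the full query on a sample exactly when no object matching the target attributes lies in $C_k$. So I will prove two directions. \emph{Sufficiency}: if $C_k \neq \emptyset$ and $C_k$ contains an adversarial object, then the heuristic $h_{-k}$ errs on that sample. \emph{Necessity}: if this fails for some $k$, then $h_{-k}$ agrees with the ground truth, so the sample gives no penalty for ignoring anchor $k$. The whole argument stays inside the closed-world formalism of Section~\ref{sec:formalism}. Objects are tuples $(c_i,s_i,p_i)$. The ground-truth answer is a deterministic function of the set $T^\ast = \{o : o \text{ matches Target\_Attrs},\ p_o \in \bigcap_i V_i\}$. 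The heuristic instead computes $T_{-k} = \{o : o \text{ matches Target\_Attrs},\ p_o \in V_{-k}\}$.

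\textbf{Step 1: decompose the relaxed region.} From $V_{-k} = (V_{-k}\cap V_k) \cup (V_{-k}\setminus V_k) = (\bigcap_i V_i) \sqcup C_k$, it follows that
\[ T_{-k} = T^\ast \sqcup \{o \text{ matching Target\_Attrs} : p_o \in C_k\}. \]
\textbf{Step 2: sufficiency.} Suppose some adversarial object (target attributes, position in $C_k$) is placed. Then $|T_{-k}| \ge |T^\ast|+1$, so the Form~0 (counting) answer of $h_{-k}$ is wrong. For Form~1 (existence) I treat the case $T^\ast=\emptyset$ separately. There $h_{-k}$ answers ``yes'' while the truth is ``no'', so it fails. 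When $T^\ast\neq\emptyset$ the existence answer coincides, so the guarantee has to be read distributionally. I would check that the generator's existence branch also draws negative samples with $T^\ast=\emptyset$ and $C_k$ populated. Then $P(Y=h_{-k}) < 1$, which gives $MI(h_{-k},\mathcal{Y})\le\epsilon$ in the sense of the Spurious Correlation definition.

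\textbf{Step 3: necessity.} If $C_k=\emptyset$, or it contains no target-attribute object, then $T_{-k}=T^\ast$ by Step~1. Hence $h_{-k}$ is pointwise correct, and the constraint $k$ can be ignored without penalty. This justifies the word ``must''. Repeating Steps 2--3 for each $k\in\{1,\dots,D\}$ covers all single-constraint drops. Dropping several constraints $K$ gives a region $V_{-K}\supseteq V_{-k}$ for any $k\in K$. Its confuser region therefore contains $C_k$, so the same adversarial object defeats it, matching the paper's claim that higher-degree heuristics are neutralized by their atomic components.

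\textbf{Main obstacle.} The hard step is non-emptiness: showing that a configuration exists in which all the $C_k$ are simultaneously non-empty grid cells, are unoccupied by anchors, and admit a target that still lies in $\bigcap_i V_i$ when required. I would argue this constructively through the Phase~1 anchor placement. Directional constraints are half-planes in grid coordinates, so the $V_i$ are axis-aligned. For $C_k \neq \emptyset$ it suffices that constraint $k$ is not implied by the others; that is, the anchors are placed so that no half-plane contains the intersection of the remaining ones. I would enforce this by rejection sampling and verify it using the $S\times S$ bound and the fact that $D\le3$.
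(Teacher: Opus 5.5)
Your Steps 1--2 for counting queries are exactly the paper's proof. The paper shows $V_{-k}\setminus V_{\text{target}} = V_{-k}\setminus(V_{-k}\cap V_k)=C_k$, so every false positive of the relaxed heuristic lies in $C_k$, and a single adversarial object there gives $|Y_{\text{partial}}|>|Y_{\text{true}}|$. Your additions are sound and in places sharper than the paper:
\begin{itemize}
\item the necessity direction ($T_{-k}=T^\ast$ whenever $C_k$ holds no target-attribute object);
\item the containment $C_k\subseteq V_{-K}\setminus\bigcap_i V_i$ for a dropped set $K\ni k$, which makes precise the paper's informal remark about higher-degree heuristics;
\item above all, the observation that for existence queries with $T^\ast\neq\emptyset$ a populated $C_k$ does not make $h_{-k}$ wrong. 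The paper's claim that the heuristic ``overcounts (or falsely detects) targets, yielding an incorrect prediction'' passes over this case.
\end{itemize}

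Two points need correcting. First, the bridge ``$P(Y=h_{-k})<1$, hence $MI(h_{-k},\mathcal Y)\le\epsilon$'' does not hold. A predictor can be wrong on many samples and still be highly informative: $h=Y+1$ is never correct, yet $MI(h,\mathcal Y)=H(\mathcal Y)$. For Form~1 the valid argument is this: if every sample has a target-attribute object in $C_k$, then $T_{-k}\neq\emptyset$ always, so $h_{-k}$ is the constant ``yes'' and carries no information. If some negative samples lack such an object, then an answer of ``no'' from $h_{-k}$ certifies that the true answer is ``no'', and the information is positive. For Form~0, $h_{-k}=Y+N$ with $N\ge 1$ stays informative, since it upper-bounds $Y$. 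The paper therefore concludes only exact-match error and a training penalty, not the MI criterion, and you should conclude the same.

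Second, your ``main obstacle'' is not part of the statement. The theorem puts the burden of populating $C_k$ on the generator, and the paper's proof takes it as given; its algorithm in fact only injects when $C_k\neq\emptyset$. Your rejection-sampling plan would also fail for templates with two relations in the same direction, say two ``left of'' constraints. One half-plane then contains the other, so one $C_k$ is empty for every anchor placement. That constraint is genuinely redundant, though, so ignoring it is not a shortcut.
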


\looseness=-1 \noindent\textit{Proof Sketch.} Let $Y_{\text{true}}$ be the set of valid target objects located within $V_{\text{target}}$, where $V_{\text{target}} = V_{-k} \cap V_k$. A model employing the partial logic heuristic evaluates only the relaxed region $V_{-k}$, identifying a candidate set of objects $Y_{\text{partial}}$. Because $V_{\text{target}} \subseteq V_{-k}$, it follows that $Y_{\text{true}} \subseteq Y_{\text{partial}}$. The false positives detected by this heuristic lie strictly in the relative complement $C_k = V_{-k} \setminus V_k$. By algorithmically guaranteeing that $C_k$ contains at least one adversarial object matching the target's visual attributes, we ensure $Y_{\text{partial}} \supset Y_{\text{true}}$. Consequently, the heuristic strictly overcounts (or falsely detects) targets, yielding an incorrect prediction and incurring a training penalty.

\noindent\textit{Implications.} If Theorem 1 holds, a model dropping anchor $k$ will erroneously detect adversarial objects within $C_k$, incurring a training loss. The $\mathcal{D}_\text{pure}$ generator actively computes $C_k$ for every anchor and routes adversarial spatial distractors into them (see Fig.~\ref{fig:generation_architecture}: the circled distractors in 2A), neutralizing the heuristic. See Appendix~\ref{app:proof} for the detailed proof.

\subsection{Dataset Generation Architecture}
\label{sec:generation}
\looseness=-1 To instantiate the formalisms described above and generate the $\mathcal{D}_{\text{pure}}$ and $\mathcal{D}_{\text{spur}}$ splits, we utilize a procedural generation logic. The algorithmic divergence between the two splits occurs strictly during the distractor injection phase.

As detailed in Algorithm~\ref{alg:generation}, Phase 1 (Lines 3-6) handles the semantic instantiation of the query, anchors, and true targets uniformly for both environments. The divergence occurs in Phase 2. The \textbf{Pure} module algorithmically computes the confuser regions ($C_k$) for every anchor and actively injects adversarial target-attribute distractors into these specific spaces (Lines 10-14). This explicit, deterministic placement mathematically guarantees the Robust Intersection regime (Theorem 1) without relying on inefficient rejection sampling. Conversely, the \textbf{Spurious} module intentionally seeds distractors across the grid until the total count of objects matching the target's attributes strictly equals the ground-truth answer $Y_{\text{ans}}$ (Line 18), actively embedding the Case-1 Bag-of-Words correlation directly into the visual scene.
\begin{algorithm}[t]
\scriptsize
\caption{GridVQA Procedural Divergence Logic}\label{alg:generation}
\begin{algorithmic}[1]
\Procedure{GenerateScene}{$\mathcal{T}, \text{Mode} \in \{\text{Pure}, \text{Spurious}\}$}
    \State \Comment{\textbf{Phase 1: Shared Spatial Setup}}
    \State $\mathcal{Q}, \text{Attrs} \gets \text{SampleQuestionTemplate}(\mathcal{T})$
    \State $\mathcal{A}, Pos_{\text{anc}} \gets \text{PlaceAnchors}(\text{Attrs.Anchor})$
    \State $\mathcal{T}_{\text{tgt}} \gets \text{PlaceTargets}(\text{Attrs.Target}, Pos_{\text{anc}}, \text{Attrs.Direction})$
    \State $\mathcal{D} \gets \emptyset$
    \State $N_{\text{dist}} \gets \text{CalculateRequiredNoise}(\mathcal{T}.\rho) - |\mathcal{A} \cup \mathcal{T}_{\text{tgt}}|$
    
    \State \Comment{\textbf{Phase 2: Divergent Distractor Injection}}
    \If{$\text{Mode} == \text{Pure}$}
        \For{$k = 1 \dots |\mathcal{A}|$} \Comment{\textbf{Enforce Theorem 1}}
            \State $C_k \gets \left( \bigcap_{i \neq k} V_i \right) \setminus V_k$ \Comment{Compute Confuser Region}
            \If{$C_k \neq \emptyset$}
                \State $\mathcal{D} \gets \mathcal{D} \cup \text{Inject}(\text{Sample}(C_k), \text{Attrs.Target})$
            \EndIf
        \EndFor
        \State $\mathcal{D} \gets \mathcal{D} \cup \text{StochasticConfuserOverload}(C_k, N_{\text{dist}} - |\mathcal{D}|)$
    \ElsIf{$\text{Mode} == \text{Spurious}$}
        \State $Y_{\text{ans}} \gets |\mathcal{T}_{\text{tgt}}|$
        \State $\mathcal{D} \gets \text{InjectShortcutObjects}(\text{Attrs.Target}, Y_{\text{ans}})$ \Comment{Forces Case-1 Shortcut}
        \State $\mathcal{D} \gets \mathcal{D} \cup \text{SampleRandomObjects}(N_{\text{dist}} - |\mathcal{D}|)$
    \EndIf
    \State \textbf{return} $\mathcal{A} \cup \mathcal{T}_{\text{tgt}} \cup \mathcal{D}, \mathcal{Q}, \text{Masks}(\mathcal{A}, \mathcal{T}_{\text{tgt}}, \mathcal{D})$
\EndProcedure
\end{algorithmic}
\end{algorithm}

\section{Model Training and Behavioral Dynamics}
\label{sec:models}

We employ a unified transformer-based architecture, specifically MDETR \cite{kamath2021mdetrmodulateddetection}, as our controlled test subject. By varying the training environments ($\mathcal{D}_{\text{pure}}$ vs.\ $\mathcal{D}_{\text{spur}}$) while keeping the architecture and loss functions identical, we isolate the distinct reasoning behaviors required for our diagnostic testbed.

\subsection{Training Dynamics and Explanation-Guided Learning}

MDETR is trained in two phases, which act as explanation-guided training \cite{ross2017rightrightreasonstraining}. Phase 1 (visual grounding) enforces visual–text alignment by penalizing unpredicted reference bounding boxes using a combination of L1 and generalized IoU losses: $\mathcal{L}_{\text{ground}} = \lambda_{L1} \mathcal{L}_{1} + \lambda_{\text{giou}} \mathcal{L}_{\text{giou}}$. Phase 2 (QA) then processes these grounded representations to predict the logical answer $Y_{\text{ans}}$. The computational difficulty of multi-hop spatial intersections frequently causes standard models to locally collapse into Answer Prior Bias (Case-0). To prevent this, Phase 2 employs a dynamically weighted cross-entropy loss: $\mathcal{L}_{\text{QA}} = -w_c \log P(\hat{Y}_{\text{ans}} = y_c)$, where $w_c$ is inversely proportional to the batch class frequency.

\subsection{Verifying the Shortcut}
To ensure our diagnostic testbed is valid and that the dataset divergence successfully imprinted the desired behaviors, we perform a rigorous cross-evaluation of the trained models. When evaluated on its training distribution, $M_{\text{spur}}$ achieves perfect accuracy ($1.000$). However, when evaluated on $\mathcal{D}_{\text{pure}}$, which introduces adversarial distractors that break the spatial shortcut, its performance drops to 49\%.

It's accuracy on multi-hop relational queries drops to $8\%$ on Depth-2 Mixed queries and $14\%$ on Depth-3 Mixed queries. Notably, it's performance on non-relational Attribute-Only queries remains at $100\%$, confirming that the model's failure is strictly isolated to its inability to process spatial composition. This empirical analysis confirms our theoretical guarantees: $M_{\text{spur}}$ operates entirely on a unimodal, Bag-of-Words shortcut (Case-1). Conversely, $M_{\text{pure}}$ achieves robust accuracy across the pure splits, empirically proving it has successfully internalized the true causal spatial-relational synergy.

\section{The MxAI Evaluation}
\label{sec:framework}

\looseness=-1 With the controlled models established, we define a comprehensive evaluation pipeline tailored to the formats of modern MxAI algorithms and address the following key questions: \textbf{(RQ1)} Can MxAI methods diagnose shortcut learning in multimodal models?
\textbf{(RQ2)} Do local explainers capture true cross-modal synergy or merely exploit visual volume?
\textbf{(RQ3)} Do global MxAI methods scale with compositional complexity?

\subsection{Experimental Setup}
\looseness=-1 Here, we outline the multimodal explainability methods alongwith the metrics and protocol adopted for their evaluation.

\xhdr{MxAI Taxonomy and Methods} We divide existing MxAI methods into two benchmark categories: \textit{Group A (Local-Level),} where we use methods \dime~\cite{lyu2022dimefinegrainedinterpretationsmultimodal}, \multishap~\cite{wang2026multishapshapleybasedframeworkexplaining}, and \multiviz-gradient~\cite{liang2023multivizvisualizingunderstandingmultimodal} that yield a cross-attribution heatmap, and \textit{Group B (Global-Level)}, where we use methods \emap~\cite{hessel2020doesmultimodalmodellearn} and InterSHAP~\cite{Wenderoth_2025} that output a global synergy scalar, $S_{\text{score}}$. See Appendix~\ref{app:related} for details about the explainability algorithms. More specificaly, these methods evaluated by our framework include - i) \dime: It disentangles model predictions into unimodal contributions and multimodal interactions and claims to enable fine-grained, architecture-agnostic analysis of multimodal behavior; ii) \multishap: A model-agnostic framework leveraging the Shapley Interaction Index to attribute predictions to synergistic and suppressive pairwise interactions between fine-grained cross-modal elements; and iii) \multiviz: It scaffolds interpretability into four stages: unimodal importance, cross-modal interactions, multimodal representations, and prediction composition, to facilitate error analysis and model debugging.\vspace{0.05in}

To further analyze the model behavior at a global level, we utilize three additional state-of-the-art methods: \textit{i) \emap:} It identifies the minimal adversarial perturbation required to alter a model's classification, effectively bridging feature-weighting paradigms with counterfactual reasoning; \textit{ii) InterSHAP:} This method quantifies cross-modal interactions using the Shapley interaction index to precisely separate individual modality contributions from synergistic effects across multiple data sources; and \textit{iii) PID:} An information-theoretic framework that decomposes multimodal interactions into independent, redundant, and synergistic components using a gradient-based Gaussian optimization approach.\vspace{0.05in}

\xhdr{Evaluation Metrics} We adapt established metrics to the multimodal domain, utilizing the $M_{\text{pure}}$ and $M_{\text{spur}}$ models as our ground truth for expected behavior. In particular, along with the standard IoU with otsu binarization, we use \textit{Relevance Mass Accuracy}. For a given mask:
\begin{equation}
\text{RMA}(I_{\text{map}}, \text{Mask}) =
\frac{\sum_{(x,y) \in \text{Mask}} |I_{\text{map}}(x,y)|}
{\sum |I_{\text{map}}|}
\end{equation}
\looseness=-1 A faithful MxAI method must assign near-zero relevance mass on $\text{Mask}_{\text{Distractor}}$ for $M_{\text{pure}}$, but high relevance for $M_{\text{spur}}$ on the same set. Finally, to prove an MxAI method captures true cross-modal synergy (rather than showing unimodal saliency), we introduce the additive fallacy check. For Group B methods, we track $S_{\text{score}}$ across the \textit{Depth} axis. True synergy capture requires $S_{\text{score}}$ to monotonically increase with depth for $M_{\text{pure}}$, while remaining invariant for $M_{\text{spur}}$.\vspace{0.05in}

\noindent\textbf{Experimental Setup:} To evaluate the fidelity and robustness of explainability methods, we analyze the above mentioned explainability algorithms. In particular, we evaluate explanations under three scenarios: \textbf{i) Pure Evaluation.} $\mathcal{M}_\text{pure}$ evaluated on $\mathcal{D}_\text{pure}$. This setting establishes a baseline by assessing whether the explainability algorithms correctly attribute importance to the intended anchors and targets; \textbf{ii) Spurious Evaluation.} $\mathcal{M}_\text{spur}$ evaluated on $\mathcal{D}_\text{spur}$. This scenario analyzes potential failure modes by examining whether the explanations reveal reduced importance assigned to directional cues while the model still produces the correct output; and \textbf{iii) Cross-Evaluation.} $\mathcal{M}_\text{spur}$ evaluated on $\mathcal{D}_\text{pure}$. This setting tests whether the explanations correctly reveal that the model assigns lower importance to the true targets and anchors when generating predictions.

\subsection{Results and Analysis}
\label{sec:results}

To systematically dissect the behavior of MxAI methods, we structure our analysis around the three core research questions (RQs) evaluating their diagnostic capacity, faithfulness grounding, and synergy estimation.

\begin{figure*}[!t]
    \centering
    \includegraphics[width=0.95\textwidth]{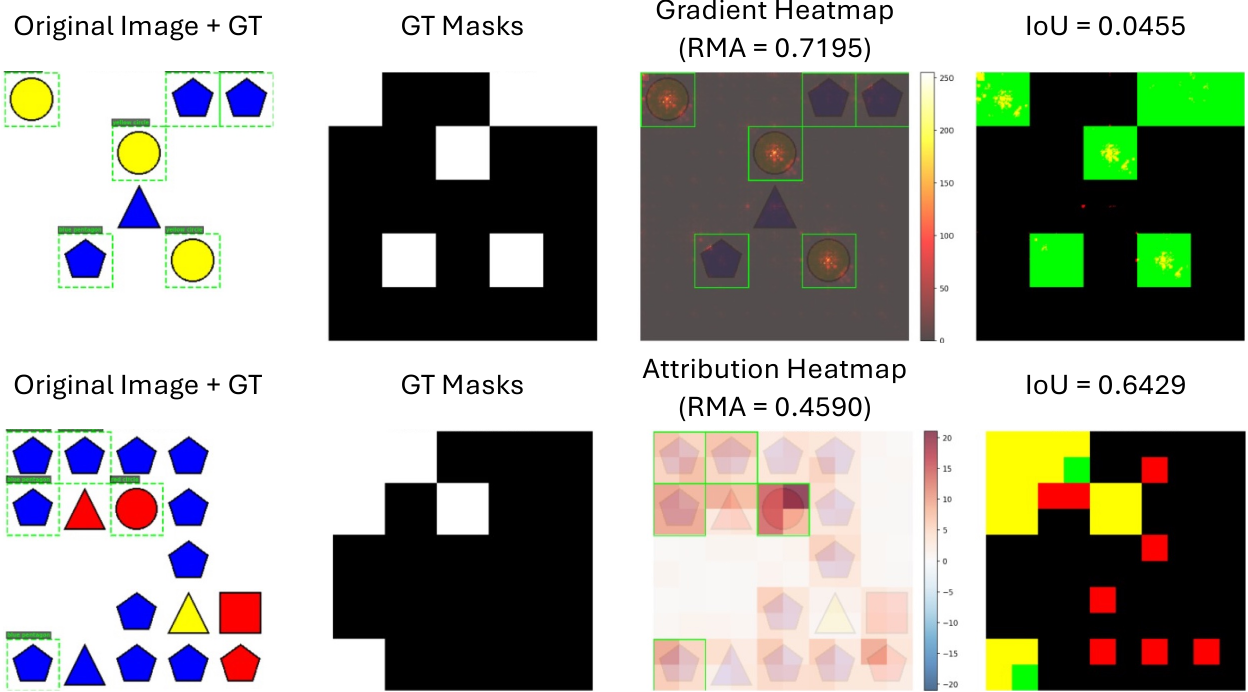}
\caption{\looseness=-1 Qualitative explanations for \multiviz~(top) and \multishap~(bottom). Q1: Are there more yellow circles than blue pentagons? (Ans: No) Q2: How many blue pentagons are left of the red circle? (Ans: 4). Progressively from left to right, we show the ground-truth mask (green), the method's prediction (red), and their overlap (yellow). We report RMA and IoU scores for different buckets across the three scenarios.}
\label{fig:qual_results}
\end{figure*}

\looseness=-1 The primary utility of \method is its ability to test whether an explainer can distinguish between true spatial reasoning ($M_{\text{pure}}$) and shallow cross-modal shortcuts ($M_{\text{spur}}$). A faithful explainer must produce vastly divergent attributions for these models, exposing $M_{\text{spur}}$'s reliance on Case-1 distractors. Overall, local methods fail this diagnostic test. \multiviz exhibits absolute model blindness, yielding statistically identical RMA ($\approx 0.44$) for both models despite their distinct causal pathways. \dime suffers from ``accidental faithfulness'': because its heatmaps are highly unconstrained and diffuse, they accidentally intersect with the true spatial target when evaluating $M_{\text{spur}}$, generating an illusion of correct reasoning that masks the shortcut. Conversely, \multishap yields \textit{higher} RMA for $M_{\text{spur}}$ (0.689) than for $M_{\text{pure}}$ (0.616). Because game-theoretic marginals align better with independent 1-to-1 feature detectors (the shortcut) than with entangled non-linear intersections, \multishap structurally prefers the spurious model. 
Global methods similarly struggle. \emap hallucinates synergy, attributing $\sim 60\%$ of the predictive power to cross-modal interaction on $M_{\text{spur}}$ even when the model catastrophically fails on $\mathcal{D}_{\text{pure}}$. Because masking either modality degrades the shallow shortcut, \emap misinterprets this dual-dependency as deep synergy.
\begin{figure*}[t]
    \centering
    \includegraphics[width=1.0\textwidth]{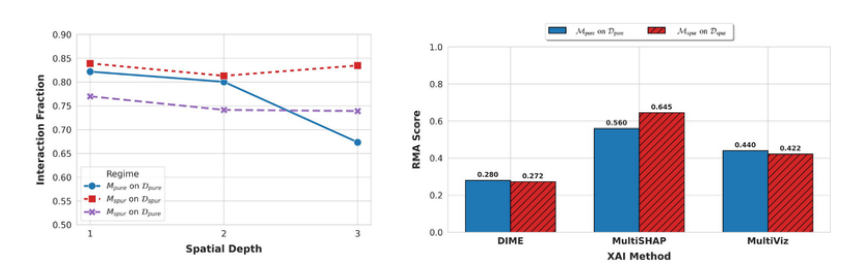}\vspace{-0.1in}
    \caption{\textbf{Quantitative Diagnostic Failures of MxAI Methods.} \textbf{(a) The Additive Fallacy:} \emap's global synergy metric incorrectly decays as spatial complexity (Depth) scales for the faithful model ($M_{\text{pure}}$), while hallucinating high synergy for the shortcut model ($M_{\text{spur}}$). \textbf{(b) RMA:} Comparison of average RMA scores demonstrating MultiSHAP's superior reasoning alignment on both faithful ($M_{\text{pure}}$) and shortcut-reliant ($M_{\text{spur}}$) models}\vspace{-0.1in}
    \vspace{-0.1in}
    \label{fig:qual_results_graph}
\end{figure*}

\begin{figure}
    \centering
    \includegraphics[width=0.5\textwidth]{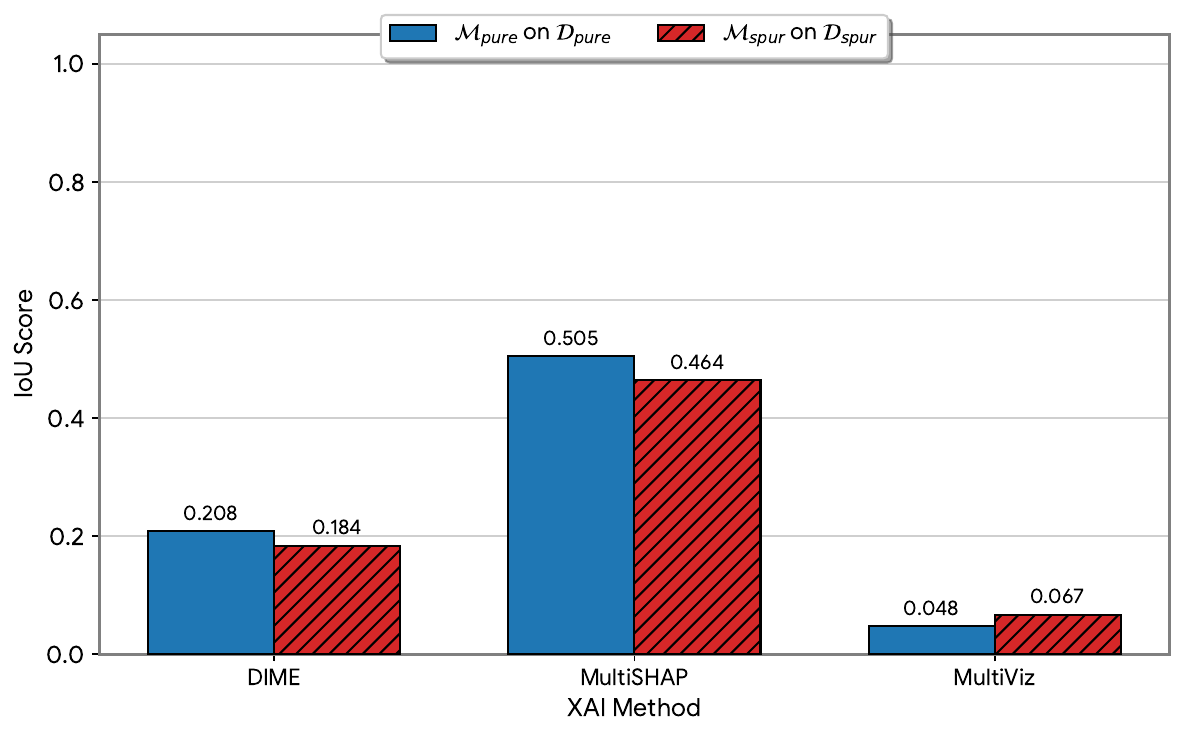}\vspace{-0.1in}
    \caption{\textbf{Comparison of the IoU scores across different Local Explainability algorithms and evaluation scenario.} Comparison of average IoU scores highlighting MultiSHAP's robust spatial grounding capabilities compared to the systemic attribution failures of DIME and MultiViz}\vspace{-0.1in}
    \vspace{-0.1in}
    \label{fig:iou_score}
\end{figure}
\looseness=-1 For \textbf{RQ2}, evaluating the micro-level explainers on $\mathcal{D}_{\text{pure}}$ reveals that they fundamentally fail to capture relational cross-modal constraints, defaulting instead to shallow feature detection or noise.
\textsc{Dime} acts as a ``spreader,'' heavily exploiting ground-truth mask volume. Its RMA artificially inflates in dense grids ($d_{0.7}$) simply due to target crowding, with no corresponding gain in spatial precision (IoU remains $\approx 0.20$). Furthermore, \dime exhibits no meaningful multimodal information gain over its unimodal baseline.
\textsc{\multishap} acts as a precise object detector (IoU $> 0.50$) but suffers from severe ``distractor leakage.'' It perfectly highlights all objects matching the target's unary attributes (color/shape), including adversarial distractors, proving an inability to apply the binary spatial constraint. This causes massive RMA penalties on Existence tasks (Form 1) compared to Counting tasks (Form 0), as false-positive distractors overwhelm the single true target.
\textsc{MultiViz} exhibits a total spatial grounding collapse. Despite moderate RMA, its IoU floors at $< 0.05$, indicating its visualizations highlight fragmented noise decoupled from actual object bounding boxes. Furthermore, InterSHAP retains a heavy bias towards mask volume, systematically assigning higher interaction values to Counting tasks ($0.921$) than equivalent Existence tasks ($0.742$).

\looseness=-1 To answer our \textbf{RQ3}, a faithful global synergy scalar must monotonically increase as the multi-hop relational complexity scales. \textsc{Emap} empirically fails this Additive Fallacy check. On Mixed (Form 0) queries in $\mathcal{D}_{\text{pure}}$, its estimated interaction fraction actually deteriorates as complexity scales, dropping from $0.821$ (Depth 1) $\to$ $0.673$ (Depth 3). By falsely indicating that tri-anchor queries require \textit{less} synergistic reasoning, \textsc{Emap} proves its estimator is misaligned with the causal graph. Similarly, \textsc{InterSHAP} echoes this failure; its interaction scores paradoxically plummet from $0.921$ (Depth 1) $\to$ $0.629$ (Depth 3) on Mixed queries, confirming that even game-theoretic interaction estimators fail the Additive Fallacy check and struggle to quantify scaling compositional complexity.

\section{Conclusion}
\label{sec:conclusion}
In this work, we introduced \method, the first diagnostic framework designed to rigorously evaluate multimodal explainability (MxAI) methods against mathematically verifiable ground-truth reasoning. Our exhaustive evaluation reveals a critical blindspot in current MxAI research: state-of-the-art explainers fundamentally fail to capture true cross-modal synergy. Local attribution methods either diffuse relevance arbitrarily to exploit mask volume or degrade into shallow object detectors, while global estimators hallucinate complex interactions on models utilizing simple cross-modal shortcuts. Ultimately, current MxAI methods are dangerously blind to the multimodal shortcuts, creating an illusion of interpretability that masks biased model behavior. By open-sourcing the \method generation engine and paired diagnostic models, we provide a definitive, zero-ambiguity testbed to shift the field away from superficial plausibility metrics and toward the development of explainers capable of verifiably diagnosing relational cross-modal grounding.

\section{Acknowledgment}
\label{sec:acknowledgement}
\looseness=-1 We would like to thank the PreCog Lab at IIIT-Hyderabad for their valuable guidance and discussions throughout this work. In particular, we thank  Vedanta S. P. and Debangan Mishra for their detailed feedback and thoughtful suggestions during multiple stages of the project. The views expressed are those of the authors and do not necessarily reflect the official policies or positions of the supporting organizations.
{
    \small
    \bibliographystyle{ieeenat_fullname}
    \bibliography{main}
}
\appendix
\newpage
\maketitlesupplementary
\setcounter{page}{1}

\section{Detailed Related Work}
\label{app:related}

\subsection{Shortcut Learning in Vision-Language Models}
\looseness=-1 Deep learning models frequently learn decision rules that exploit spurious correlations in training data rather than the intended causal logic. This phenomenon is often referred to as the \textit{Clever Hans effect}, named after a horse that appeared to perform arithmetic but was later discovered to be responding to subtle cues from human observers rather than actually solving the task \cite{ye2025cleverhansmiragecomprehensive}. In machine learning, this effect describes models that appear to perform a task correctly while in reality relying on unintended signals or dataset artifacts.

In the context of Visual Question Answering (VQA), models often rely on language-only priors to predict answers without actually looking at the image \cite{rahmanzadehgervi2025visionlanguagemodelsblind}. Recent research identifies various shortcuts that artificially inflate performance on multimodal benchmarks: i) \textbf{Linguistic and Background Statistics:} Models frequently rely on background correlations rather than the primary object's features to make predictions \cite{Geirhos_2020, 11094344}; ii) \textbf{Keyword Shortcuts:} Models often learn a shallow, direct mapping between a noun, color, or shape and a high-probability answer, entirely bypassing the relational or spatial context \cite{si-etal-2022-language, li2025devildetailstacklingunimodal}. While these shortcuts yield high accuracy on identically distributed test sets, they cause catastrophic failure in out-of-distribution or adversarial scenarios. Our work directly addresses this by mathematically isolating these keyword and background shortcuts within the \method generation algorithm.

\subsection{Explainability Guided Training}
\looseness=-1 Existing research \cite{Shao_Skryagin_Stammer_Schramowski_Kersting_2021, ross2017rightrightreasonstraining} highlights that achieving high accuracy is insufficient if the underlying decision-making process relies on spurious correlations. Explainability guided training addresses this by incorporating auxiliary constraints during optimization to align model reasoning with human expectations. In vision-language domains, this often takes the form of visual grounding supervision, where models are penalized if their attention or predicted bounding boxes do not align with ground-truth causal features \cite{kamath2021mdetrmodulateddetection}. 

\looseness=-1 By explicitly supervising the intermediate representations, researchers can mitigate shortcut reliance. In the context of \method, we utilize this paradigm not just to improve accuracy, but as a controlled mechanism to train our diagnostic models. By modulating the grounding loss and dynamically weighting the cross-entropy loss, we successfully enforce the behavioral divergence required to train both $M_{\text{pure}}$ (which strictly aligns with causal spatial-relational geometry) and $M_{\text{spur}}$ (which bypasses spatial grounding to exploit injected statistical traps).

\subsection{Detailed Mechanics of Evaluated MxAI Methods}
Existing research \cite{dang2024explainableinterpretablemultimodallarge, sun2024reviewmultimodalexplainableartificial} highlights the growing need for explainability techniques. We broadly classify these methods based on their attribution scope, dividing them into local (micro-level) and global (macro-level) taxonomies:

\noindent\textbf{1. Local (Micro-Level) Attribution Methods.}
These methods focus on fine-grained attributions, seeking to explain model predictions by isolating the specific contributions of individual image patches, pixels, or text tokens, as well as the localized interactions between them.
\begin{itemize}[leftmargin=*]
    \item \textbf{\multishap \cite{wang2026multishapshapleybasedframeworkexplaining}:} A model-agnostic framework that leverages the Shapley Interaction Index to attribute predictions to pairwise interactions between fine-grained visual and textual elements (such as image patches and text tokens).
    \item \textbf{\dime \cite{lyu2022dimefinegrainedinterpretationsmultimodal}:} DIME enables fine-grained analysis by explicitly disentangling model behavior into unimodal contributions (UC) and multimodal interactions (MI). It is designed to maintain generality across arbitrary modalities, model architectures, and tasks, allowing stakeholders to understand model behavior and performing debugging.
    \item \textbf{\multiviz \cite{liang2023multivizvisualizingunderstandingmultimodal}:} A framework for visualizing and analyzing model behavior by scaffolding interpretability into four stages: (1) unimodal importance, (2) cross-modal interactions, (3) multimodal representations, and (4) multimodal prediction.  These complementary stages enable users to simulate predictions, assign interpretable concepts to features, perform error analysis, and debug models.
    \item \textbf{PixelSHAP + TokenSHAP \cite{goldshmidt2025attentionpleasepixelshapreveals, goldshmidt2024tokenshapinterpretinglargelanguage}:} A model-agnostic multimodal attribution framework that combines TokenSHAP for textual analysis and PixelSHAP for visual reasoning to produce joint explanations for vision–language model predictions. TokenSHAP attributes the model's output to individual tokens or substrings within the input prompt using Shapley values, modeling each token as a cooperative player whose contribution is estimated through Monte Carlo sampling over token subsets. PixelSHAP extends the same Shapley-value framework to visual inputs by treating segmented image regions or objects as players in the cooperative game and quantifying their influence on the model output through systematic perturbations of these visual components.
\end{itemize}

\noindent\textbf{2. Global (Macro-Level) Attribution Methods.}
Rather than focusing on individual tokens or patches, these methods evaluate the overall contribution of entire modalities and the high-level synergistic interactions between the visual and textual inputs as a whole.
\begin{itemize}[leftmargin=*]
    \item \textbf{InterSHAP \cite{Wenderoth_2025}:} A game-theoretic approach that shifts the focus from local tokens to the overall contribution of entire modalities, specifically isolating the synergistic interactions between the visual and textual inputs as a whole.
    \item \textbf{EMAP \cite{hessel2020doesmultimodalmodellearn}:} A diagnostic tool designed to identify whether cross-modal interactions genuinely improve model performance. It utilizes an empirical multimodality-additive function projection to modify model predictions such that cross-modal interactions are eliminated, isolating the additive, unimodal structure. This reveals whether a high-performing black-box model actually utilizes complex cross-modal reasoning or mostly exploits unimodal signals in the data.
    \item \textbf{Partial Information Decomposition (PID)\cite{williams2010nonnegativedecompositionmultivariateinformation}:} An information-theoretic framework for analyzing how multiple input sources jointly contribute to a model's prediction. PID decomposes the mutual information between a set of inputs and an output into distinct components representing unique, redundant, and synergistic information. In multimodal settings, this decomposition enables quantifying how much predictive information is provided exclusively by the visual modality, exclusively by the textual modality, redundantly by both, or only through their joint interaction. PID provides a principled way to measure whether a multimodal model truly relies on cross-modal reasoning rather than unimodal shortcuts.
\end{itemize}

\section{Dataset Generation Details and Proofs}
\label{app:proof}

\subsection{Case-0: Answer Prior Bias Elimination}
\label{app:case0_balancing}
As introduced in Section~\ref{sec:spurious}, the Answer Prior Bias (Case-0) heuristic assumes the model ignores the visual modality entirely and predicts $\hat{y}$ by exploiting the marginal distribution of answers $P(\mathcal{Y})$. In natural VQA, answers are heavily skewed, allowing models to achieve artificially high accuracy via prior guessing. In \method, we mitigate this through explicit dataset balancing. As verified by our dataset statistics, for \textit{Form 1} (Existence) queries, the generative engine enforces strict target-distractor independence, yielding a near-perfect balanced distribution across all buckets. 

For \textit{Form 0} (Counting) queries, placing multiple objects satisfying complex spatial constraints on a finite grid inherently introduces a structural skew. For instance, high depth queries in sparse environments (\textit{D3\_M\_F0\_$d_{0.3}$}) yield ``1'' as the most frequent answer ($37.2\%$). Thus, a pure Case-0 heuristic is strictly bounded by this empirical ceiling, making it insufficient to solve the dataset.

Our generation algorithm guarantees that the marginal answer distributions are almost identical across the two splits: $P(\mathcal{Y})_{Pure} = P(\mathcal{Y})_{Spurious}$. Since the Case-0 heuristic space is almost identical between the two environments, prior bias cannot account for any downstream divergence in behavior between the Pure and Spurious testbeds.

\begin{figure*}[!t]
    \centering
    \includegraphics[width=\textwidth]{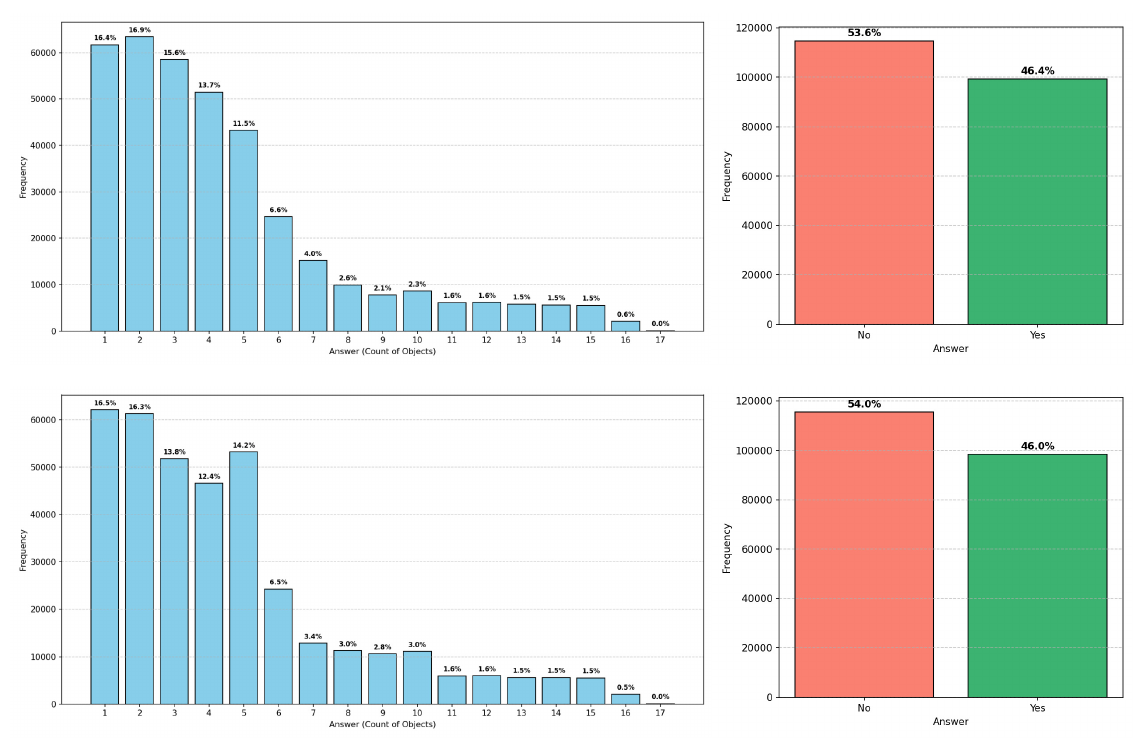}
    \caption{\textbf{Answer distribution in the training set.} Top row shows the answer distribution for the $\mathcal{D}_\text{pure}$ , while the bottom row shows the answer distribution for the $\mathcal{D}_\text{spur}$. Left plots correspond to the counting task (Form0) with answers ranging from 1–17, and right plots correspond to the binary yes no task (Form1).}
    \label{fig:dataset_answer_priors}
\end{figure*}

\subsection{Case-1: The Spatial Shortcut (Bag-of-Words) Elimination}
\label{app:case1_shortcut}
Our empirical validation confirms the mathematical design of the dataset splits regarding the spatial shortcut. Predicting the target count based solely on the bag-of-words spatial shortcut yields a 100\% success rate ($P = 1.00$) in $\mathcal{D}_{\text{spur}}$, successfully embedding the behavioral trap. Conversely, in $\mathcal{D}_{\text{pure}}$, the deterministic injection of adversarial distractors collapses this shortcut's success rate to 36.07\%, proving that a model mathematically cannot converge on this split without computing cross-modal spatial synergy.

\subsection{Case-2: Visual Feature Dominance Elimination}
\label{app:case2_balancing}

The Visual Feature Dominance heuristic assumes the model ignores the textual query $\mathcal{Q}$ entirely and predicts the answer based solely on layout imbalances inherent in the image $\mathcal{V}$. Two primary sub-hypotheses exist within this family: the Salience Prior ($H_{sal}$), where the presence of a specific attribute correlates with an answer, and the Majority Class Prior ($H_{\text{maj}}$), where the model blindly predicts based on the most frequent object type.

\looseness=-1 To prevent any single attribute from becoming a predictive shortcut ($H_{sal}$), our generation pipeline utilizes independent, uniform multi-randomization. During semantic instantiation, the exact values for color, shape, and direction are drawn from uniform distributions. Because the marginal probability of any specific visual feature occurring is statistically decorrelated from the ground-truth answer, $H_{sal}$ provides no reliable predictive signal. To nullify the visual majority dominance ($H_{\text{maj}}$), the generative engine must logically decorrelate the target from the visual majority, satisfying Property 2 (Orthogonality of Frequency). We structurally counter the majority heuristic through two distinct mechanisms. 

First, for all buckets in $\mathcal{D}_{\text{spur}}$ and the non-relational Attribute-Only buckets in $\mathcal{D}_{\text{pure}}$, the generator intervenes during noise injection. As outlined in Algorithm~\ref{alg:balancing}, it utilizes a stochastic branching mechanism parameterized over the atomic feature space.
\begin{algorithm}[h]
\caption{\textbf{Stochastic Target-Distractor Balancing}}
\label{alg:balancing}
\begin{algorithmic}[1]
    \Require Target attribute set $\mathcal{A}_{\text{target}}$, Total grid capacity $C$
    \State $u \sim \mathcal{U}(0, 1)$ \Comment{Sample from standard uniform distribution}
    \If{$u < 0.5$}
        \State \Comment{\textbf{BOOST Branch:} Force target to be the majority class}
        \State $N_{\text{target}} \gets \text{Sample}(\text{Upper Range})$
        \State $\text{Inject}(N_{\text{target}}, \mathcal{A}_{\text{target}})$
    \Else
        \State \Comment{\textbf{SUPPRESS Branch:} Force a disjoint distractor to be the majority}
        \State $N_{\text{target}} \gets \text{Sample}(\text{Lower Range})$
        \State $\mathcal{A}_{\text{dist}} \gets \text{Sample}(\mathcal{V}_{\text{attributes}} \setminus \mathcal{A}_{\text{target}})$
        \State $N_{\text{dist}} \gets \text{Sample}(> N_{\text{target}})$
        \State $\text{Inject}(N_{\text{target}}, \mathcal{A}_{\text{target}})$
        \State $\text{Inject}(N_{\text{dist}}, \mathcal{A}_{\text{dist}})$
    \EndIf
\end{algorithmic}
\end{algorithm}
While the theoretical split is uniform, downstream generative constraints (\eg valid grid space exhaustion and integer bounding) modulate this distribution. This active balancing bounds the success rate of the $H_{\text{maj}}$ heuristic to an empirical average of $\sim 44\%$ across evaluated buckets, rendering it statistically insufficient.

\looseness=-1 Second, for the relational spatial queries in $\mathcal{D}_{\text{pure}}$, the $H_{\text{maj}}$ heuristic is destroyed as an emergent consequence of eliminating the spatial shortcut (Case-1). By injecting adversarial distractors that share the target's visual attributes but explicitly lie outside the valid bounding region, the target's visual class almost always becomes the majority class (\eg $99.36\%$ frequency in \textit{D1\_CO\_F0\_$d_{0.3}$}). However, because these distractors violate the spatial constraint, the global count of the majority class inherently exceeds the true relational ground-truth count. A model attempting to use $H_{\text{maj}}$ will consistently overcount and fail, safely eliminating the heuristic.

\subsection{Case-3: Logical Decomposition (Partial Logic) Elimination}
\label{app:case3_proof}

\looseness=-1 The Logical Decomposition heuristic occurs when a model receives a multi-hop compositional query (e.g., Depth 2 or Depth 3) but evaluates only a subset of the spatial constraints. Let a complex query require the logical intersection of $M$ spatial constraints relative to $M$ anchors: $\mathcal{Q} = \bigwedge_{i=1}^M C_i$. A model exploiting partial logic will drop constraint $k$, effectively treating the intersection as $\bigwedge_{i \neq k} C_i$. In $\mathcal{D}_{\text{spur}}$, the model never encounters this heuristic because it collapses entirely to the Case-1 Bag-of-Words shortcut, bypassing spatial evaluation altogether. However, in $\mathcal{D}_{\text{pure}}$, where Case-1 is mathematically eliminated, the model is forced to evaluate spatial regions. If the dataset allows the set of objects satisfying $\bigwedge_{i \neq k} C_i$ to frequently be a subset of those satisfying $C_k$, the model will learn to safely ignore anchor $k$ without penalty. To definitively eliminate this heuristic in the Pure dataset, we generalize the Robust Intersection requirement to $M$ dimensions (Theorem 1). The formal proof expanding on the main text's proof sketch is as follows.

\noindent\textbf{Proof of Theorem 1.} Let $Y_{\text{true}}$ be the set of valid target objects that correctly answer the multimodal query. By definition, these objects must possess the target visual attributes and reside entirely within the strictly valid spatial intersection of all anchors: 
$$V_{\text{target}} = \bigcap_{i=1}^M V_i$$
A model employing the partial logic heuristic drops spatial constraint $k$ to simplify computation, thus evaluating the relaxed geometric region:
$$V_{-k} = \bigcap_{i \neq k} V_i$$
Let $Y_{\text{partial}}$ be the candidate set of objects detected by this heuristic. Because the intersection of a subset of regions is always a superset of the intersection of all regions, it geometrically holds that $V_{\text{target}} \subseteq V_{-k}$. Consequently, $Y_{\text{true}} \subseteq Y_{\text{partial}}$. The set of false positives detected by the heuristic is strictly defined by the relative complement of these spatial regions:
$$Y_{\text{FP}} \subset \left( V_{-k} \setminus V_{\text{target}} \right)$$
We can simplify the false positive bounding region as:
$$V_{-k} \setminus V_{\text{target}} = V_{-k} \setminus \left( V_{-k} \cap V_k \right) = V_{-k} \setminus V_k = C_k$$
Thus, any false positive objects must reside within the confuser region $C_k$. By algorithmically guaranteeing that $C_k$ is non-empty and contains at least one adversarial object $d_{\text{adv}}$ matching the target's visual attributes, the heuristic will blindly detect it, guaranteeing that $|Y_{\text{FP}}| \ge 1$. Therefore, the total count of objects detected by the heuristic is:
$$|Y_{\text{partial}}| = |Y_{\text{true}}| + |Y_{\text{FP}}| > |Y_{\text{true}}|$$
Because the heuristic strictly overcounts the true targets, it yields an incorrect prediction, incurs a strict training loss penalty, and is mathematically eliminated. \hfill $\blacksquare$

\noindent\textbf{Adversarial Confuser Placement Implementation.} 
Rather than relying on random rejection sampling to fulfill this theorem, the $\mathcal{D}_{\text{pure}}$ generation engine actively constructs these adversarial confuser regions during the target placement phase. As detailed in Algorithm~\ref{alg:confuser}, the engine computes $C_k$ for every anchor in the query.
\begin{algorithm}[H]
\caption{\textbf{Adversarial Confuser Placement for Depth $\ge 2$}}
\label{alg:confuser}
\begin{algorithmic}[1]
    \Require List of Anchor Valid Regions $[V_1, V_2, \dots, V_M]$, Target Attributes $\mathcal{A}_{\text{target}}$
    \State \Comment{\textbf{Phase 1: Guaranteed Confuser Placement}}
    \For{$k = 1 \dots M$}
        \State $V_{-k} \gets \bigcap_{i \neq k} V_i$ \Comment{Intersection of all OTHER anchors}
        \State $C_k \gets V_{-k} \setminus V_k$ \Comment{Subtract the dropped anchor's region}
        \If{$C_k \neq \emptyset$}
            \State $pos \gets \text{Sample}(C_k)$
            \State $\text{Inject}(pos, \mathcal{A}_{\text{target}})$ \Comment{Place target-attribute object in confuser region}
        \EndIf
    \EndFor
    \State \Comment{\textbf{Phase 2: Stochastic Filler Overload}}
    \While{$\text{Noise Target Not Met}$}
        \If{$\text{random}() < 0.75$ \textbf{and} $\text{Sample From Outside Valid Region}$}
            \State $k \gets \text{SampleRandomAnchor}()$
            \State $\text{Inject}(\text{Sample}(C_k), \mathcal{A}_{\text{target}})$
        \EndIf
    \EndWhile
\end{algorithmic}
\end{algorithm}
Before injecting general background noise, the engine guarantees the placement of exactly one target-attribute object into every valid confuser region $C_k$. During the subsequent filler loop, the engine overwhelmingly ($75\%$ probability) targets these specific $C_k$ regions when injecting adversarial spatial distractors. 

Our empirical reports rigorously validate this active confuser placement. When a heuristic drops a spatial constraint, its predictive success rate plummets. For instance, in dense Depth 2 environments (\textit{D2\_M\_F0\_$d_{0.7}$}), dropping either anchor 1 or anchor 2 yields a predictive ratio of only $16.40\%$ and $16.87\%$, respectively. Consequently, minimizing the training loss requires the strict evaluation of the full compositional intersection $\bigwedge_{i=1}^M C_i$, effectively neutralizing the Logical Decomposition heuristic.

\subsection{Collision-Free Object Placement Requirements}
To ensure that spatial relationships are unambiguous and $C_k$ regions are geometrically sound, the generator employs a strictly collision-free placement algorithm. When instantiating the set of anchor objects $\mathcal{A}$ and target objects $\mathcal{T}_{\text{tgt}}$, the grid is modeled as a discrete 2D matrix. Anchors are placed sequentially; for multi-hop queries ($D > 1$), subsequent anchors are placed such that their valid intersecting region $V_{\text{target}} = \bigcap V_i$ contains at least one available grid cell. To prevent edge-case spatial ambiguities, we enforce a strict minimum grid-cell distance margin between objects involved in directional relationships.

\section{Dataset Details}
\label{app:dataset}
Figures~\ref{fig:d03_pure_examples}–\ref{fig:d07_spur_examples} illustrate representative examples from the $\mathcal{D}_{\text{pure}}$ and $\mathcal{D}_{\text{spur}}$ datasets across different reasoning depths, forms, question types, and scene densities. Each panel shows an image–question pair along with the corresponding answer. 

\section{Additional Results}

\subsection{Model Accuracies}
Refer to Tables~\ref{tab:model_acc_d03} and~\ref{tab:model_acc_d07} for the accuracies of existing open-source models compared to our trained models, $\mathcal{M}_{\text{pure}}$ and $\mathcal{M}_{\text{spur}}$.

\subsection{Local Explainability Algorithms}
Tables~\ref{tab:rma_pure},~\ref{tab:iou_pure},~\ref{tab:rma_spur}, and~\ref{tab:iou_spur} give the bucket-wise performance of the three algorithms on the dataset.

\subsection{Global Explainability Algorithms}
Figures~\ref{fig:pureds_puremodel_examples}--\ref{fig:intershap_spur} illustrate attribution mass distributions and interaction-based explanations for both $\mathcal{M}_{\text{pure}}$ and $\mathcal{M}_{\text{spur}}$ across datasets.

\section{Limitations and Future Work}
While GridVQA-X mathematically guarantees unique ground-truth explanations, it operates within a highly controlled 2D abstraction that fundamentally lacks the noisy feature distributions and deeply entangled semantic representations of real-world multimodal tasks. Additionally, the framework's scope is currently restricted to spatial-relational composition and basic attribute binding. It does not measure an explainer’s faithfulness when dealing with other complex cross-modal synergies, such as temporal reasoning, physical commonsense, or complex mathematical logic.

To bridge this abstraction gap, future work should extend these diagnostic principles to richer spatial reasoning benchmarks to provide a more rigorous stress test involving continuous spatial relations and realistic object interactions. Furthermore, since our empirical results expose that existing explainers often devolve into shallow detectors or hallucinate complex interactions, future research must shift toward developing novel explanation algorithms specifically designed to pass this zero-ambiguity testbed. Finally, extending the current evaluation pipeline to explicitly assess the faithfulness of natural language rationales and chain-of-thought explanations generated directly by multimodal LLMs presents a highly promising direction.

\newpage
\begin{figure*}[t]
    \centering
    \includegraphics[width=0.83\linewidth]{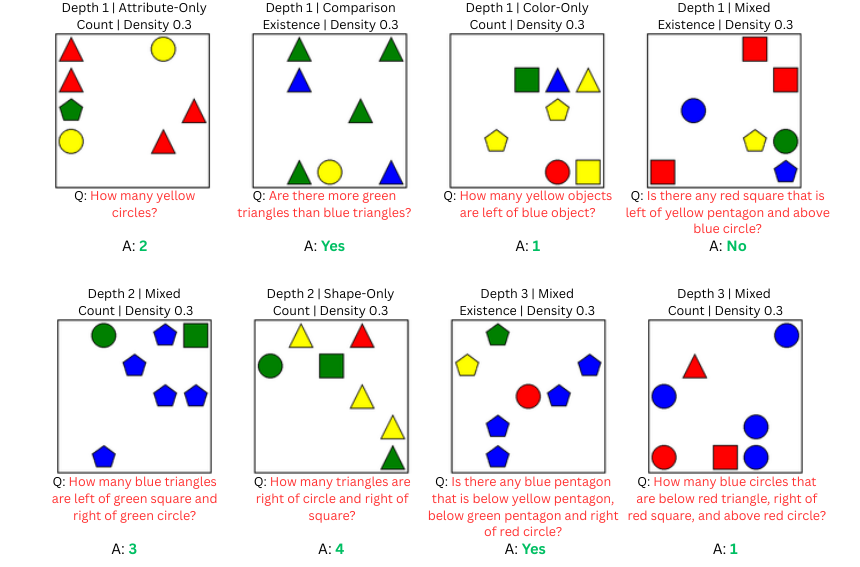}
    \caption{Example samples from $\mathcal{D}_{\text{pure}}$ with density $d=0.3$.}
    \label{fig:d03_pure_examples}
\end{figure*}

\begin{figure*}
    \centering
    \includegraphics[width=0.83\linewidth]{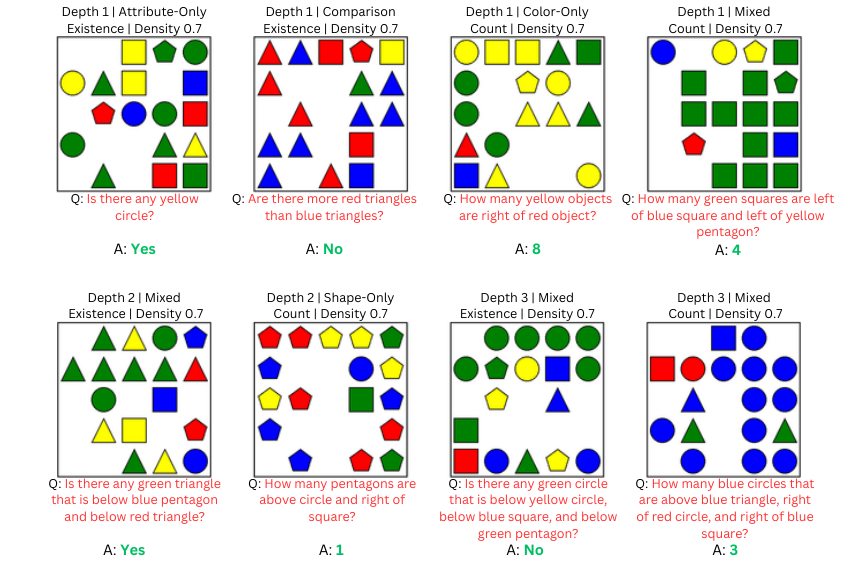}
    \caption{Example samples from $\mathcal{D}_{\text{pure}}$ with density $d=0.7$.}
    \label{fig:d07_pure_examples}
\end{figure*}

\clearpage
\begin{figure*}
    \centering
    \includegraphics[width=0.83\linewidth]{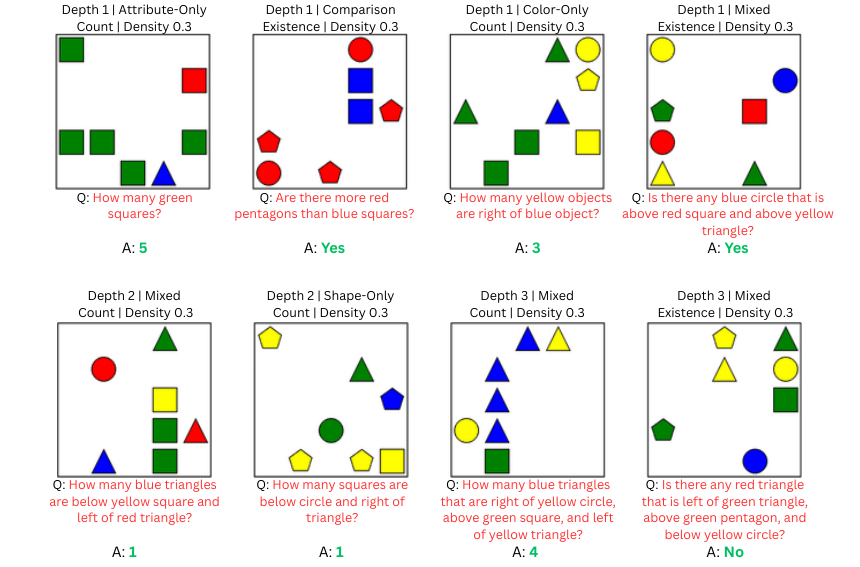}
    \caption{Example samples from $\mathcal{D}_{\text{spur}}$ with density $d=0.3$.
    Note that the correct answer can be inferred even when the directional cues are ignored.}
    \label{fig:d03_spur_examples}
\end{figure*}

\begin{figure*}
    \centering
    \includegraphics[width=0.83\linewidth]{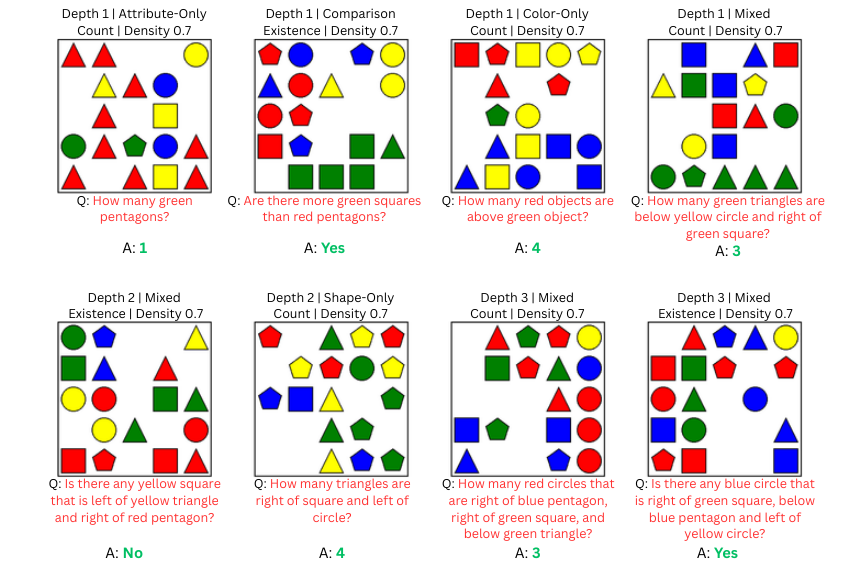}
    \caption{Example samples from $\mathcal{D}_{\text{spur}}$ with density $d=0.7$.
    Note that the correct answer can be inferred even when the directional cues are ignored.}
    \label{fig:d07_spur_examples}
\end{figure*}
\clearpage

\begin{table*}[htbp]
\centering

\setlength{\tabcolsep}{9pt}      
\renewcommand{\arraystretch}{0.9} 
\caption{Model Accuracies for Open Source Models on $\mathcal{D}_{\text{pure}}$ with $d_{0.3}$}
\label{tab:model_acc_d03}
\begin{tabular}{llcccccc}
\toprule
Bucket & Depth & Qwen2.5-VL-3B & Gemma-3-4B & Llava-1.5-7B & Qwen3-VL-30B & $\mathcal{M}_{\text{spur}}$ & $\mathcal{M}_{\text{pure}}$ \\
\midrule
\multicolumn{8}{l}{\textbf{Form 0}}\\
A  & D1 & 68.0 & 48.0 & 16.0 & 70.0 & 100 & \textbf{100} \\
SO & D1 & 20.0 & 18.0 & 18.0 & 24.0 & 32 & \textbf{100} \\
SO & D2 & 22.0 & 34.0 & 22.0 & 32.0 & 16 & \textbf{100} \\
CO & D1 & 34.0 & 20.0 & 14.0 & 28.0 & 18 & \textbf{100} \\
CO & D2 & 22.0 & 38.0 & 22.0 & 38.0 & 16 & \textbf{100} \\
M  & D1 & 32.0 & 24.0 & 22.0 & 20.0 & 36 & \textbf{100} \\
M  & D2 & 28.0 & 30.0 & 28.0 & 28.0 & 8 & \textbf{100} \\
M  & D3 & 32.0 & 38.0 & 40.0 & 34.0 & 14 & \textbf{100} \\
CMP & D1 & 76.0 & 50.0 & 52.0 & 72.0 & 100 & \textbf{100} \\
CMP & D2 & 66.0 & 44.0 & 54.0 & 66.0 & 96 & \textbf{100} \\
CMP & D3 & 46.0 & 42.0 & 38.0 & 50.0 & 96 & \textbf{100} \\
\midrule
\multicolumn{8}{l}{\textbf{Form 1}}\\
A  & D1 & 94.0 & 90.0 & 50.0 & 94.0 & 100 & \textbf{100} \\
SO & D1 & 48.0 & 44.0 & 60.0 & 80.0 & 46 & \textbf{100} \\
SO & D2 & 58.0 & 64.0 & 56.0 & 60.0 & 60 & \textbf{100} \\
CO & D1 & 56.0 & 52.0 & 58.0 & 78.0 & 50 & \textbf{100} \\
CO & D2 & 62.0 & 60.0 & 58.0 & 74.0 & 56 & \textbf{100} \\
M  & D1 & 64.0 & 62.0 & 54.0 & 70.0 & 62 & \textbf{100} \\
M  & D2 & 48.0 & 48.0 & 46.0 & 68.0 & 46 & \textbf{100} \\
M  & D3 & 56.0 & 56.0 & 58.0 & 40.0 & 56 & \textbf{100} \\
\bottomrule
\end{tabular}
\end{table*}

\begin{table*}[t]
\centering

\setlength{\tabcolsep}{9pt}      
\renewcommand{\arraystretch}{0.9} 
\caption{Model Accuracies for Open Source Models on $\mathcal{D}_{\text{pure}}$ with $d_{0.7}$}
\label{tab:model_acc_d07}
\begin{tabular}{llcccccc}
\toprule
Bucket & Depth & Qwen2.5-VL-3B & Gemma-3-4B & Llava-1.5-7B & Qwen3-VL-30B & $\mathcal{M}_{\text{spur}}$ & $\mathcal{M}_{\text{pure}}$ \\
\midrule
\multicolumn{8}{l}{\textbf{Form 0}}\\
A  & D1 & 30.0 & 14.0 & 14.0 & 36.0 & 100 & \textbf{100} \\
SO & D1 & 22.0 & 10.0 & 10.0 & 12.0 & 2 & \textbf{100} \\
SO & D2 & 18.0 & 22.0 & 18.0 & 18.0 & 0 & \textbf{100} \\
CO & D1 & 22.0 & 4.0 & 0.0 & 6.0 & 6 & \textbf{100} \\
CO & D2 & 12.0 & 20.0 & 22.0 & 10.0 & 2 & \textbf{100} \\
M  & D1 & 16.0 & 8.0 & 6.0 & 14.0 & 6 & \textbf{100} \\
M  & D2 & 12.0 & 22.0 & 28.0 & 10.0 & 2 & \textbf{100} \\
M  & D3 & 32.0 & 24.0 & 30.0 & 10.0 & 0 & \textbf{100} \\
CMP & D1 & 66.0 & 42.0 & 42.0 & 76.0 & \textbf{100} & 98 \\
CMP & D2 & 52.0 & 14.0 & 48.0 & 36.0 & 92 & \textbf{100} \\
CMP & D3 & 28.0 & 18.0 & 24.0 & 20.0 & 80 & \textbf{100} \\
\midrule
\multicolumn{8}{l}{\textbf{Form 1}}\\
A  & D1 & 92.0 & 70.0 & 48.0 & 88.0 & 100 & \textbf{100} \\
SO & D1 & 46.0 & 46.0 & 60.0 & 68.0 & 46 & \textbf{100} \\
SO & D2 & 62.0 & 60.0 & 56.0 & 48.0 & 60 & \textbf{100} \\
CO & D1 & 50.0 & 50.0 & 54.0 & 70.0 & 50 & \textbf{100} \\
CO & D2 & 56.0 & 56.0 & 56.0 & 56.0 & 56 & \textbf{100} \\
M  & D1 & 58.0 & 56.0 & 60.0 & 78.0 & 56 & \textbf{100} \\
M  & D2 & 50.0 & 52.0 & 48.0 & 66.0 & 46 & \textbf{100} \\
M  & D3 & 58.0 & 44.0 & 54.0 & 50.0 & 56 & \textbf{100} \\
\bottomrule
\end{tabular}
\end{table*}

\begin{table*}[t]
\centering

\setlength{\tabcolsep}{9pt}      
\renewcommand{\arraystretch}{0.9} 
\caption{\textbf{RMA ($\uparrow$) comparison on $M_{\text{pure}}$ with $\mathcal{D}_{\text{pure}}$.} Note the severe performance drop across all methods on Existence tasks (Form 1) compared to Counting tasks (Form 0), highlighting a structural reliance on mask volume rather than precise logical routing.}
\label{tab:rma_pure}
\begin{tabular}{llcccccc}
\toprule
 &  & \multicolumn{2}{c}{DIME} & \multicolumn{2}{c}{MultiSHAP} & \multicolumn{2}{c}{MultiViz} \\
Bucket & Depth & $d_{0.3}$ & $d_{0.7}$ & $d_{0.3}$ & $d_{0.7}$ & $d_{0.3}$ & $d_{0.7}$ \\
\midrule
\multicolumn{8}{l}{\textbf{Form 0}}\\
A  & D1 & 26.7\err{9.8} & 41.4\err{13.6} & 66.7\err{29.9} & 69.2\err{21.6} & 47.3\err{15.1} & 56.4\err{17.0}\\
SO & D1 & 25.8\err{10.1} & 36.1\err{15.6} & 73.5\err{20.5} & 61.4\err{23.2} & 51.3\err{11.7} & 48.6\err{14.4}\\
SO & D2 & 26.3\err{8.2} & 30.5\err{12.9} & 64.7\err{9.9} & 49.5\err{17.9} & 49.7\err{11.8} & 45.4\err{13.6}\\
CO & D1 & 26.9\err{9.6} & 40.4\err{14.3} & 70.2\err{14.4} & 66.7\err{12.7} & 47.9\err{13.3} & 50.7\err{14.1}\\
CO & D2 & 27.6\err{7.3} & 32.1\err{12.6} & 68.1\err{15.7} & 54.2\err{15.7} & 48.0\err{11.8} & 51.8\err{12.4}\\
M  & D1 & 29.6\err{9.7} & 35.9\err{14.4} & 60.4\err{24.0} & 61.4\err{17.7} & 47.9\err{14.6} & 45.6\err{11.4}\\
M  & D2 & 31.2\err{9.5} & 30.0\err{11.4} & 70.4\err{10.8} & 53.8\err{14.5} & 47.6\err{12.7} & 46.7\err{14.8}\\
M  & D3 & 31.1\err{9.4} & 27.9\err{7.0} & 68.3\err{10.9} & 47.1\err{15.4} & 50.8\err{13.5} & 44.9\err{9.8}\\
CMP & D1 & 35.8\err{4.9} & 56.0\err{12.5} & 87.4\err{5.8} & 79.2\err{8.4} & 46.0\err{13.6} & 55.7\err{13.2}\\
CMP & D2 & 34.2\err{5.9} & 44.0\err{10.4} & 83.9\err{4.0} & 71.6\err{12.5} & 49.6\err{11.7} & 49.6\err{15.7}\\
CMP & D3 & 35.4\err{4.8} & 38.2\err{7.6} & 82.6\err{3.3} & 62.0\err{9.0} & 52.2\err{10.2} & 52.4\err{11.4}\\
\midrule
\multicolumn{8}{l}{\textbf{Form 1}}\\
A  & D1 & 23.1\err{9.7} & 30.4\err{15.2} & 56.4\err{15.9} & 52.9\err{22.2} & 42.9\err{18.5} & 48.3\err{25.1}\\
SO & D1 & 12.6\err{8.6} & 16.5\err{17.0} & 45.1\err{20.9} & 32.4\err{16.3} & 28.6\err{16.5} & 31.3\err{19.4}\\
SO & D2 & 17.1\err{6.7} & 18.2\err{9.1} & 50.3\err{15.5} & 35.9\err{11.9} & 37.8\err{13.6} & 30.5\err{13.4}\\
CO & D1 & 16.3\err{10.3} & 19.0\err{13.5} & 44.9\err{19.2} & 28.6\err{10.3} & 32.1\err{13.5} & 30.6\err{18.4}\\
CO & D2 & 20.8\err{9.9} & 20.1\err{9.7} & 51.4\err{16.9} & 29.7\err{9.1} & 40.4\err{14.6} & 36.8\err{11.1}\\
M  & D1 & 17.4\err{9.4} & 19.1\err{13.2} & 43.3\err{20.5} & 31.9\err{16.5} & 34.8\err{18.4} & 31.7\err{16.4}\\
M  & D2 & 20.8\err{7.4} & 18.7\err{6.8} & 37.3\err{13.4} & 34.2\err{12.8} & 36.1\err{13.8} & 39.1\err{19.4}\\
M  & D3 & 24.3\err{6.8} & 24.3\err{8.1} & 43.7\err{16.2} & 36.4\err{9.4} & 44.9\err{11.7} & 40.8\err{13.5}\\
\bottomrule
\end{tabular}
\end{table*}

\begin{table*}[t]
\centering
\setlength{\tabcolsep}{9pt}      
\renewcommand{\arraystretch}{0.9} 
\caption{\textbf{IoU ($\uparrow$) comparison on $M_{\text{pure}}$ with $\mathcal{D}_{\text{pure}}$.} MultiSHAP demonstrates strong object grounding, while DIME exhibits poor precision due to systemic attribution spread, and MultiViz suffers a complete spatial grounding collapse ($<10\%$ across all depths).}
\label{tab:iou_pure}
\begin{tabular}{llcccccc}
\toprule
 &  & \multicolumn{2}{c}{DIME} & \multicolumn{2}{c}{MultiSHAP} & \multicolumn{2}{c}{MultiViz} \\
Bucket & Depth & $d_{0.3}$ & $d_{0.7}$ & $d_{0.3}$ & $d_{0.7}$ & $d_{0.3}$ & $d_{0.7}$ \\
\midrule
\multicolumn{8}{l}{\textbf{Form 0}}\\
A  & D1 & 28.6\err{14.4} & 27.4\err{10.5} & 71.3\err{31.6} & 62.1\err{22.8} & 7.7\err{3.7} & 5.3\err{3.4}\\
SO & D1 & 20.7\err{10.1} & 25.2\err{10.2} & 38.7\err{25.8} & 44.3\err{25.7} & 4.6\err{3.2} & 3.9\err{3.1}\\
SO & D2 & 21.4\err{8.0} & 22.2\err{7.9} & 55.7\err{10.7} & 35.9\err{21.9} & 4.0\err{2.7} & 4.1\err{2.9}\\
CO & D1 & 21.0\err{9.2} & 23.7\err{9.8} & 68.7\err{12.3} & 58.4\err{20.2} & 3.1\err{2.4} & 2.2\err{2.0}\\
CO & D2 & 19.6\err{6.4} & 21.8\err{8.2} & 53.7\err{18.7} & 56.8\err{11.5} & 3.6\err{2.3} & 2.9\err{2.5}\\
M  & D1 & 25.9\err{10.1} & 23.1\err{9.6} & 51.1\err{26.1} & 64.0\err{13.1} & 5.2\err{2.8} & 3.5\err{3.1}\\
M  & D2 & 22.7\err{8.3} & 20.5\err{7.3} & 59.1\err{21.4} & 60.1\err{15.3} & 4.4\err{2.9} & 3.6\err{2.2}\\
M  & D3 & 20.1\err{6.7} & 18.7\err{7.8} & 47.7\err{17.6} & 45.6\err{19.6} & 4.6\err{2.6} & 4.3\err{2.3}\\
CMP & D1 & 28.8\err{7.0} & 27.4\err{8.5} & 84.0\err{19.5} & 47.0\err{20.3} & 4.8\err{2.7} & 3.5\err{1.8}\\
CMP & D2 & 24.4\err{5.8} & 25.5\err{7.3} & 66.2\err{21.5} & 51.7\err{20.2} & 4.0\err{2.3} & 2.4\err{1.6}\\
CMP & D3 & 20.3\err{5.3} & 20.8\err{7.4} & 48.6\err{14.0} & 49.5\err{11.8} & 4.3\err{2.1} & 2.9\err{2.1}\\
\midrule
\multicolumn{8}{l}{\textbf{Form 1}}\\
A  & D1 & 31.3\err{14.4} & 28.0\err{13.1} & 73.0\err{9.4} & 58.7\err{30.2} & 9.0\err{3.4} & 7.3\err{4.8}\\
SO & D1 & 11.7\err{8.2} & 12.7\err{9.4} & 42.3\err{16.3} & 41.3\err{20.6} & 5.0\err{3.8} & 6.3\err{4.1}\\
SO & D2 & 14.5\err{6.9} & 15.6\err{6.6} & 37.0\err{12.0} & 31.2\err{15.6} & 4.8\err{2.8} & 5.6\err{3.1}\\
CO & D1 & 15.7\err{10.0} & 18.8\err{14.7} & 47.4\err{16.9} & 41.5\err{27.6} & 5.6\err{4.3} & 6.0\err{4.0}\\
CO & D2 & 14.4\err{7.1} & 14.6\err{7.6} & 47.4\err{19.4} & 39.6\err{15.3} & 4.4\err{3.3} & 5.0\err{3.2}\\
M  & D1 & 17.7\err{11.5} & 20.1\err{14.1} & 44.0\err{21.5} & 33.4\err{20.6} & 6.5\err{4.7} & 6.9\err{4.6}\\
M  & D2 & 15.5\err{6.6} & 15.6\err{7.8} & 33.6\err{12.2} & 43.0\err{19.9} & 5.6\err{2.4} & 5.2\err{3.0}\\
M  & D3 & 17.3\err{6.3} & 17.4\err{6.7} & 47.4\err{18.6} & 39.0\err{10.2} & 4.2\err{2.5} & 4.3\err{2.7}\\
\bottomrule
\end{tabular}
\end{table*}

\begin{table*}[t]
\centering

\setlength{\tabcolsep}{9pt}      
\renewcommand{\arraystretch}{0.9} 
\caption{\textbf{RMA ($\uparrow$) comparison on $M_{\text{spur}}$ with $\mathcal{D}_{\text{spur}}$.} MultiSHAP scores noticeably higher on this shortcut-reliant model than on the faithful $M_{\text{pure}}$, showing that game-theoretic methods may inherently favor shallow 1-to-1 feature matching over entangled spatial synergy.}
\label{tab:rma_spur}
\begin{tabular}{llcccccc}
\toprule
 &  & \multicolumn{2}{c}{DIME} & \multicolumn{2}{c}{MultiSHAP} & \multicolumn{2}{c}{MultiViz} \\
Bucket & Depth & $d_{0.3}$ & $d_{0.7}$ & $d_{0.3}$ & $d_{0.7}$ & $d_{0.3}$ & $d_{0.7}$ \\
\midrule
\multicolumn{8}{l}{\textbf{Form 0}}\\
A  & D1 & 23.5\err{7.0} & 41.2\err{15.0} & 75.2\err{20.7} & 85.2\err{4.7} & 36.9\err{20.5} & 44.9\err{18.1}\\
SO & D1 & 24.0\err{9.0} & 37.9\err{14.5} & 83.3\err{5.1} & 78.7\err{17.4} & 37.6\err{17.0} & 36.1\err{16.4}\\
SO & D2 & 26.0\err{8.0} & 34.7\err{14.3} & 77.7\err{10.8} & 69.3\err{17.0} & 49.4\err{18.5} & 53.6\err{18.6}\\
CO & D1 & 26.6\err{7.5} & 36.2\err{13.9} & 77.7\err{12.8} & 80.7\err{7.8} & 47.5\err{17.2} & 45.1\err{12.0}\\
CO & D2 & 25.4\err{7.8} & 33.1\err{10.4} & 78.3\err{7.9} & 81.0\err{10.2} & 53.0\err{13.8} & 51.8\err{12.6}\\
M  & D1 & 26.3\err{7.5} & 37.5\err{11.9} & 71.4\err{18.3} & 64.8\err{17.5} & 47.8\err{15.4} & 49.2\err{15.9}\\
M  & D2 & 27.6\err{8.0} & 33.0\err{9.8} & 71.4\err{13.0} & 64.8\err{16.5} & 57.5\err{15.6} & 50.8\err{13.3}\\
M  & D3 & 32.5\err{6.1} & 34.5\err{7.9} & 72.9\err{13.3} & 57.2\err{19.2} & 58.5\err{15.4} & 43.7\err{12.5}\\
CMP & D1 & 32.4\err{8.0} & 51.6\err{13.9} & 82.0\err{7.8} & 71.0\err{12.5} & 47.3\err{17.6} & 49.1\err{19.4}\\
CMP & D2 & 31.6\err{5.6} & 44.4\err{13.0} & 79.8\err{6.7} & 63.1\err{15.0} & 53.4\err{14.5} & 47.4\err{14.9}\\
CMP & D3 & 33.6\err{7.6} & 38.2\err{9.7} & 79.9\err{4.1} & 62.5\err{11.3} & 54.7\err{12.5} & 46.3\err{14.4}\\
\midrule
\multicolumn{8}{l}{\textbf{Form 1}}\\
A  & D1 & 19.4\err{9.1} & 32.4\err{17.5} & 52.1\err{16.8} & 59.5\err{26.2} & 24.7\err{15.5} & 28.6\err{17.4}\\
SO & D1 & 10.3\err{8.6} & 17.1\err{21.1} & 56.9\err{22.2} & 48.5\err{21.9} & 33.6\err{21.5} & 30.7\err{21.6}\\
SO & D2 & 14.2\err{7.8} & 14.7\err{7.8} & 71.3\err{14.3} & 61.8\err{19.9} & 51.8\err{21.9} & 45.4\err{16.7}\\
CO & D1 & 14.7\err{9.2} & 17.5\err{13.9} & 56.4\err{22.0} & 43.7\err{22.5} & 23.2\err{22.7} & 21.2\err{19.6}\\
CO & D2 & 18.6\err{9.6} & 21.1\err{12.8} & 62.0\err{12.7} & 56.5\err{11.5} & 44.2\err{20.5} & 34.1\err{16.7}\\
M  & D1 & 16.6\err{10.2} & 19.8\err{15.8} & 44.5\err{25.4} & 26.5\err{17.7} & 26.3\err{17.4} & 22.7\err{15.7}\\
M  & D2 & 17.3\err{8.4} & 16.8\err{10.8} & 46.3\err{12.8} & 29.4\err{18.0} & 39.9\err{17.1} & 25.2\err{14.7}\\
M  & D3 & 23.8\err{8.0} & 25.9\err{13.7} & 59.3\err{21.1} & 36.9\err{14.1} & 43.7\err{18.1} & 30.7\err{13.6}\\
\bottomrule
\end{tabular}
\end{table*}

\begin{table*}[t]
\centering

\setlength{\tabcolsep}{9pt}      
\renewcommand{\arraystretch}{0.9} 
\caption{\textbf{IoU ($\uparrow$) comparison on $M_{\text{spur}}$ with $\mathcal{D}_{\text{spur}}$.} MultiSHAP isolates the independent visual features driving the Bag-of-Words shortcut, whereas DIME and MultiViz consistently fail to generate coherent, object-centric bounding boxes regardless of the reasoning pathway.}
\label{tab:iou_spur}
\begin{tabular}{llcccccc}
\toprule
 &  & \multicolumn{2}{c}{DIME} & \multicolumn{2}{c}{MultiSHAP} & \multicolumn{2}{c}{MultiViz} \\
Bucket & Depth & $d_{0.3}$ & $d_{0.7}$ & $d_{0.3}$ & $d_{0.7}$ & $d_{0.3}$ & $d_{0.7}$ \\
\midrule
\multicolumn{8}{l}{\textbf{Form 0}}\\
A  & D1 & 25.1\err{12.6} & 25.1\err{9.1} & 66.1\err{20.6} & 56.7\err{24.5} & 9.3\err{5.1} & 5.1\err{3.9}\\
SO & D1 & 20.2\err{8.8} & 22.9\err{9.1} & 44.3\err{23.1} & 58.8\err{19.7} & 7.6\err{4.4} & 4.3\err{3.8}\\
SO & D2 & 20.5\err{8.1} & 23.3\err{10.1} & 48.0\err{20.4} & 33.4\err{17.2} & 8.2\err{4.3} & 6.7\err{3.9}\\
CO & D1 & 21.6\err{8.1} & 24.0\err{7.8} & 61.5\err{16.3} & 64.8\err{14.2} & 6.3\err{4.0} & 6.4\err{4.1}\\
CO & D2 & 16.6\err{5.8} & 19.7\err{6.7} & 46.7\err{21.6} & 63.7\err{11.9} & 7.1\err{3.3} & 6.8\err{3.3}\\
M  & D1 & 21.4\err{8.8} & 23.5\err{8.4} & 50.9\err{18.4} & 47.9\err{17.8} & 7.4\err{4.3} & 6.0\err{4.0}\\
M  & D2 & 18.6\err{7.3} & 20.9\err{8.1} & 40.9\err{21.1} & 36.0\err{21.9} & 7.9\err{3.8} & 5.8\err{3.2}\\
M  & D3 & 16.1\err{6.1} & 18.0\err{6.5} & 44.7\err{21.7} & 35.4\err{17.4} & 6.3\err{3.9} & 4.0\err{2.8}\\
CMP & D1 & 26.1\err{9.2} & 28.0\err{9.1} & 54.7\err{21.8} & 40.3\err{18.2} & 6.5\err{3.0} & 3.8\err{3.5}\\
CMP & D2 & 21.4\err{6.9} & 22.8\err{10.3} & 61.4\err{19.0} & 36.6\err{17.1} & 6.5\err{3.8} & 4.4\err{2.5}\\
CMP & D3 & 15.3\err{5.2} & 18.4\err{7.3} & 44.9\err{17.2} & 34.2\err{14.7} & 6.8\err{3.3} & 4.5\err{2.7}\\
\midrule
\multicolumn{8}{l}{\textbf{Form 1}}\\
A  & D1 & 24.6\err{15.5} & 26.7\err{14.0} & 69.0\err{19.6} & 39.0\err{20.6} & 7.4\err{6.2} & 8.2\err{6.3}\\
SO & D1 & 9.4\err{8.5} & 12.1\err{12.1} & 62.3\err{31.9} & 42.6\err{24.0} & 9.3\err{6.6} & 9.8\err{5.6}\\
SO & D2 & 11.5\err{6.9} & 12.4\err{6.6} & 45.2\err{23.6} & 48.6\err{17.7} & 8.6\err{5.1} & 10.1\err{4.4}\\
CO & D1 & 13.6\err{9.6} & 15.0\err{12.2} & 61.6\err{23.9} & 42.8\err{19.5} & 5.0\err{5.0} & 5.4\err{5.3}\\
CO & D2 & 12.2\err{5.9} & 12.4\err{8.7} & 57.2\err{16.7} & 48.8\err{15.7} & 6.5\err{3.8} & 7.4\err{4.5}\\
M  & D1 & 15.0\err{10.0} & 14.3\err{11.1} & 34.7\err{22.6} & 20.6\err{19.5} & 6.6\err{5.0} & 6.9\err{4.8}\\
M  & D2 & 12.1\err{6.9} & 10.3\err{5.7} & 37.5\err{17.6} & 20.9\err{10.9} & 7.0\err{4.0} & 6.2\err{3.6}\\
M  & D3 & 14.2\err{6.2} & 13.5\err{7.0} & 33.3\err{20.4} & 27.6\err{15.4} & 6.7\err{3.5} & 6.3\err{3.6}\\
\bottomrule
\end{tabular}
\end{table*}

\begin{figure*}[h]
    \centering
    \includegraphics[width=0.9\linewidth]{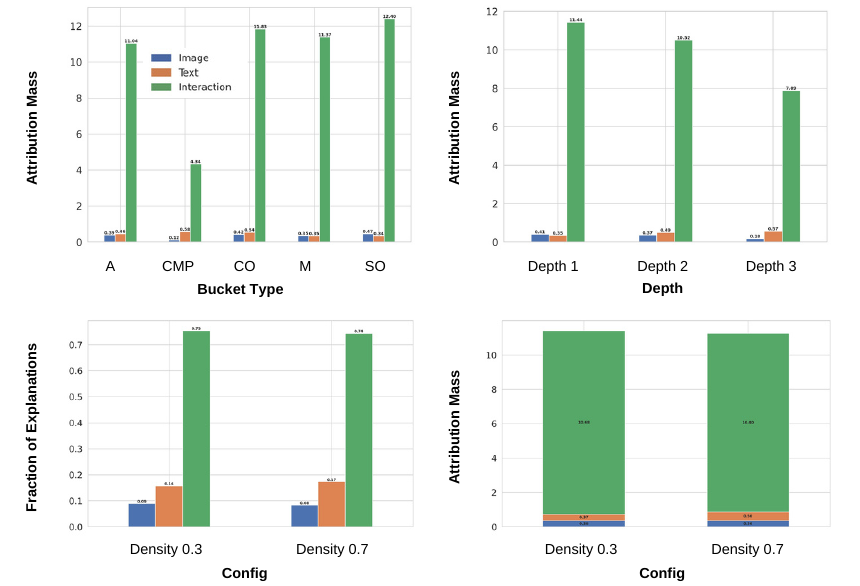}
    \caption{EMAP attribution mass and explanation fractions for $\mathcal{M}_{\text{pure}}$ evaluated on $\mathcal{D}_{\text{pure}}$, segmented by query type, relational depth, and grid density.}
    \label{fig:pureds_puremodel_examples}
\end{figure*}

\begin{figure*}[h]
    \centering
    \includegraphics[width=0.9\linewidth]{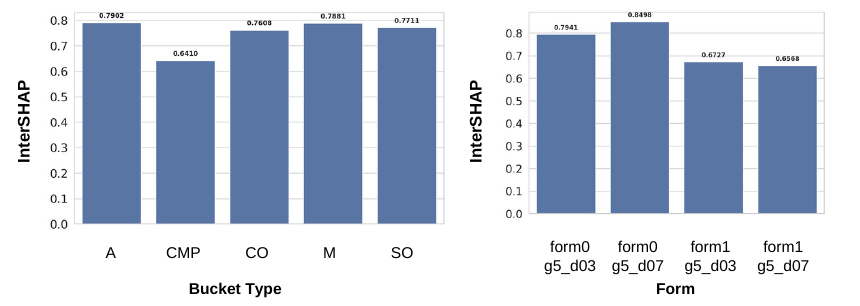}
    \caption{\textbf{InterSHAP score from $\mathcal{M}_{\text{spur}}$ with $\mathcal{D}_{\text{spur}}$.}}
    \label{fig:intershap_spur}
\end{figure*}

\begin{figure*}[h]
    \centering
    \includegraphics[width=0.9\linewidth]{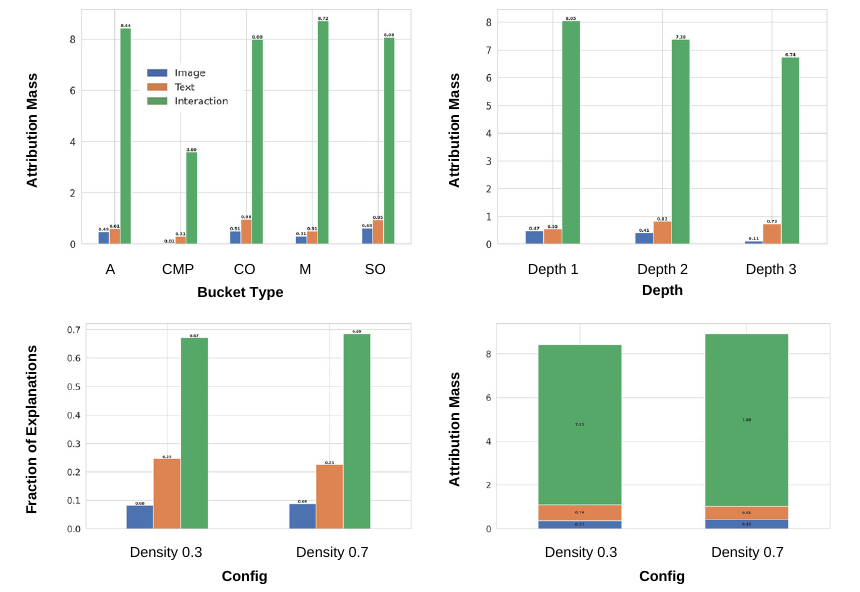}
    \caption{EMAP attribution mass and explanation fractions for $\mathcal{M}_{\text{spur}}$ evaluated on $\mathcal{D}_{\text{spur}}$, segmented by query type, relational depth, and grid density.}
    \label{fig:spurds_spurmodel_examples}
\end{figure*}

\begin{figure*}[h]
    \centering
    \includegraphics[width=0.9\linewidth]{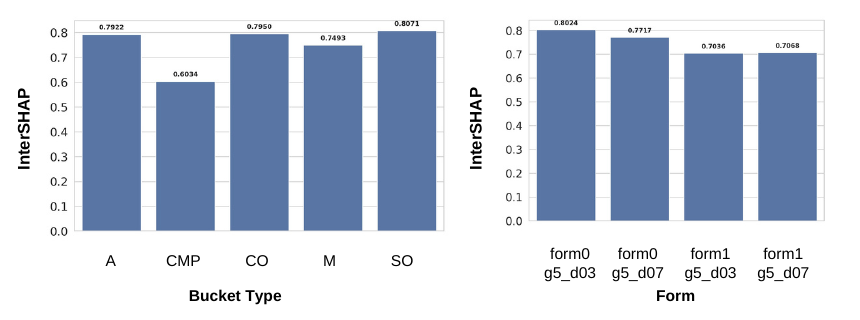}
    \caption{\textbf{InterSHAP scores on $\mathcal{M}_{\text{pure}}$ with $\mathcal{D}_{\text{pure}}$.}}
    \label{fig:intershap_pure}
\end{figure*}

\end{document}